\documentclass{article}



    \usepackage[preprint]{neurips_2024}



\usepackage[utf8]{inputenc} 
\usepackage[T1]{fontenc}    
\usepackage{hyperref}       
\usepackage{url}            
\usepackage{booktabs}       
\usepackage{amsfonts}       
\usepackage{nicefrac}       
\usepackage{microtype}      
\usepackage{xcolor}         

\usepackage{amsmath,amsthm,amssymb}

\usepackage{graphicx}
\usepackage{subcaption}
\usepackage{multicol,lipsum}
\usepackage{multirow}
\usepackage{wrapfig}
\usepackage{array}
\usepackage{amsmath}
\usepackage{amssymb}
\usepackage{mathtools}
\usepackage{amsthm}
\usepackage{algorithm, algorithmic}
\usepackage{enumitem}
\usepackage{natbib}


\theoremstyle{plain}
\newtheorem{theorem}{Theorem}[section]

\newtheorem{lemma}[theorem]{Lemma}

\newtheorem{definition}[theorem]{Definition}
\newtheorem{assumption}[theorem]{Assumption}
\theoremstyle{remark}


\usepackage{amsmath,amsfonts,bm, amsthm, mathrsfs, mathtools}






\newcommand*\diff{\mathop{}\!\mathrm{d}}
















\def\1{\bm{1}}











\DeclareMathAlphabet{\mathsfit}{\encodingdefault}{\sfdefault}{m}{sl}
\SetMathAlphabet{\mathsfit}{bold}{\encodingdefault}{\sfdefault}{bx}{n}


\def\gD{{\mathcal{D}}}

\def\gO{{\mathcal{O}}}



\def\sR{{\mathbb{R}}}








\newcommand{\E}{\mathbb{E}}



\usepackage{autonum}

\title{RAMP: Boosting Adversarial Robustness Against Multiple $l_p$ Perturbations for Universal Robustness}

%

\author{Enyi Jiang, Gagandeep Singh  \\
	University of Illinois Urbana-Champaign\\
	\texttt{\{enyij2,ggnds\}@illinois.edu} \\
}

\begin{document}

\maketitle

\begin{abstract}
  Most existing works focus on improving robustness against adversarial attacks bounded by a single $l_p$ norm using adversarial training (AT). However, these AT models' multiple-norm robustness (union accuracy) is still low, which is crucial since in the real-world an adversary is not necessarily bounded by a single norm.
The tradeoffs among robustness against multiple $l_p$ perturbations and accuracy/robustness make obtaining good union and clean accuracy challenging. We design a logit pairing loss to improve the union accuracy by analyzing the tradeoffs from the lens of distribution shifts. We connect natural training (NT) with AT via gradient projection, to incorporate useful information from NT into AT, where we empirically and theoretically show it moderates the accuracy/robustness tradeoff. We propose a novel training framework \textbf{RAMP}, to boost the robustness against multiple $l_p$ perturbations. \textbf{RAMP} can be easily adapted for robust fine-tuning and full AT. For robust fine-tuning, \textbf{RAMP} obtains a union accuracy up to $53.3\%$ on CIFAR-10, and $29.1\%$ on ImageNet. For training from scratch, \textbf{RAMP} achieves a union accuracy of $44.6\%$ and good clean accuracy of $81.2\%$ on ResNet-18 against AutoAttack on CIFAR-10. Beyond multi-norm robustness \textbf{RAMP}-trained models achieve superior \textit{universal robustness}, effectively generalizing against a range of unseen adversaries and natural corruptions.

\end{abstract}

\section{Introduction}
Though deep neural networks (DNNs) demonstrate superior performance in various vision applications, they are vulnerable against adversarial examples~\citep{goodfellow2014explaining, kurakin2018adversarial}. Adversarial training (AT)~\citep{tramer2017ensemble,madry2017towards} which works by injecting adversarial examples into training for enhanced robustness, is currently the most popular defense. However, most AT methods address only a \emph{single} type of perturbation~\citep{Wang2020mart, wu2020awp, carmon2019unlabeled, gowal2020uncovering, raghunathan2020understanding, zhang2021_GAIRAT, debenedetti2022adversarially, peng2023robust,pmlr-v202-wang23ad}. An $l_\infty$ robust model may not be robust against $l_p (p \neq \infty)$ attacks. Also, enhancing robustness against one perturbation type can sometimes increase vulnerability to others~\citep{engstrom2017rotation, schott2018towards}. On the contrary, training a model to be robust against multiple $l_p$ perturbations is crucial as it reflects real-world scenarios~\citep{sharif2016accessorize, eykholt2018robust,song2018physical,athalye2018synthesizing} where adversaries can use multiple $l_p$ perturbations. We show that multi-norm robustness is the key to improving generalization against other threat models~\citep{croce2022eat}. For instance, we show it enables robustness against perturbations not easily defined mathematically, such as image corruptions and unseen adversaries~\citep{wong2020learning}. 





Two main challenges exist for training models robust against multiple perturbations: (i) tradeoff among robustness against different perturbation models~\citep{tramer2019atmp} and (ii) tradeoff between accuracy and robustness~\citep{zhang2019trades, raghunathan2020understanding}. Adversarial examples induce a shift from the original distribution, causing a drop in clean accuracy with AT~\citep{xie2020adversarial, benz2021batch}. The distinct distributions created by $l_1, l_2, l_\infty$ adversarial examples make the problem even more challenging. Through a finer analysis of the distribution shifts caused by these adversaries, we propose the \textbf{RAMP} framework to efficiently boost the \textbf{R}obustness \textbf{A}gainst \textbf{M}ultiple \textbf{P}erturbations. \textbf{RAMP} can be used for both fine-tuning and training from scratch. It utilizes a novel logit pairing loss on a certain pair and connects NT with AT via gradient projection~\citep{jiang2023fedgp} to improve union accuracy while maintaining good clean accuracy and training efficiency.

\noindent\textbf{Logit pairing loss.} We visualize the changing of $l_1, l_2, l_\infty$ robustness when fine-tuning a $l_\infty$-AT pre-trained model in Figure~\ref{fig:l1linf-tradeoff-intro} using the CIFAR-10 training dataset. The DNN loses substantial robustness against $l_\infty$ attack after only $1$ epoch of fine-tuning: $l_1$ fine-tuning and E-AT~\citep{croce2022eat} (red and yellow histograms under Linf category) both lose significant $l_{\infty}$ robustness (compared with blue histogram under Linf category). Inspired by this observation, we devise a new logit pairing loss for a $l_q - l_r$ tradeoff pair to attain better union accuracy, which enforces the logit distributions of $l_q$ and $l_r$ adversarial examples to be close, specifically on the correctly classified $l_q$ subsets. In comparison, our method (green histogram under Linf and union categories) preserves more $l_\infty$ and union robustness than others after $1$ epoch. We show this technique works on larger models and datasets (Section~\ref{sec:robust-finetune}). 

\begin{wrapfigure}{r}{0.52\textwidth}
\vspace{-0.8cm}
         \centering
         \includegraphics[width=\linewidth]{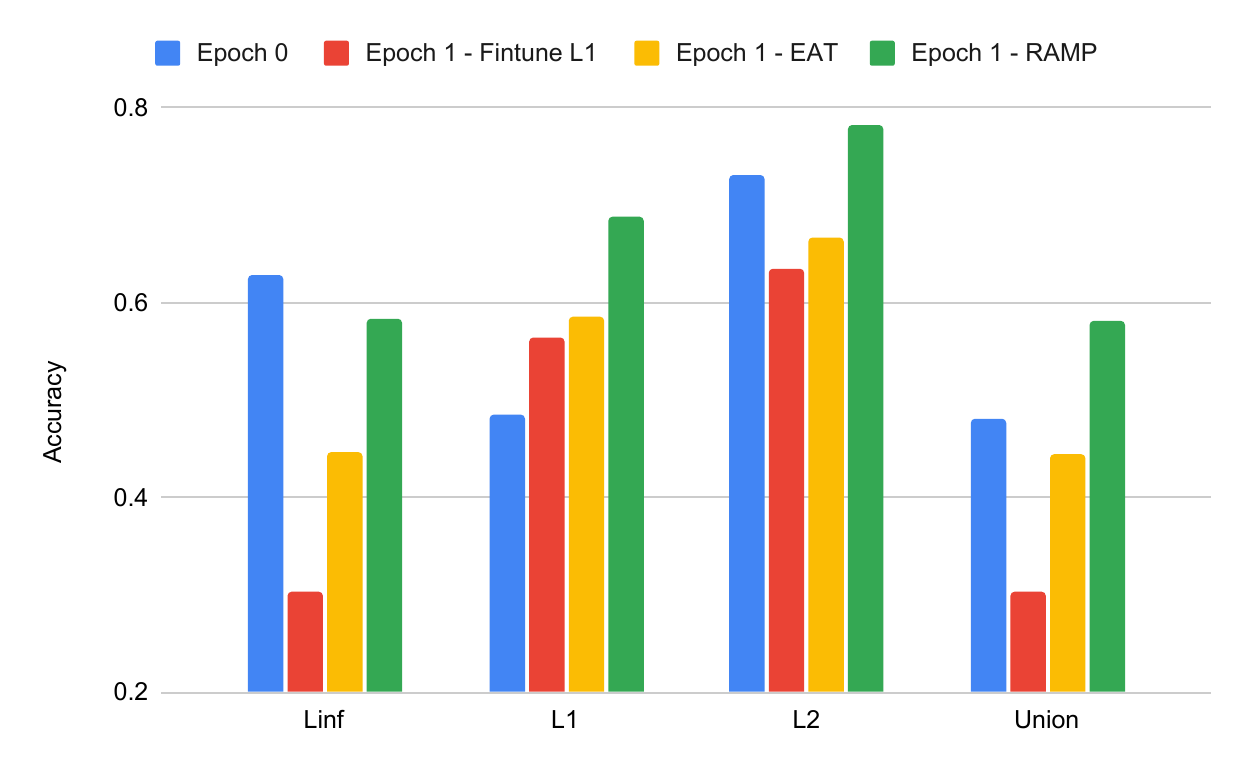}
         \caption{\label{fig:l1linf-tradeoff-intro} \textbf{Multiple-norm tradeoff with robust fine-tuning}: We observe that fine-tuning on $l_\infty$-AT model using $l_1$ examples drastically reduces $l_\infty$ robustness. \textbf{RAMP} preserves more $l_\infty$ and union robustness.} 
\vspace*{-\baselineskip}
\end{wrapfigure}

\noindent\textbf{Connect natural training (NT) with AT.}
We explore the connections between NT and AT to obtain a better accuracy/robustness tradeoff. We find that NT can help with adversarial robustness: useful information in natural distribution can be extracted and leveraged to achieve better robustness. To this end, we compare the similarities of model updates of NT and AT \emph{layer-wise} for each epoch, where we find and incorporate useful NT components into AT via gradient projection (GP), as outlined in Algorithm~\ref{algo:atgp}. In Figure~\ref{algo:atgp} and Section~\ref{sec:robust-pretrain}, we empirically and theoretically show this technique strikes a better balance between accuracy and robustness, for both single and multiple $l_p$ perturbations. We provide a theoretical analysis of why GP works for adversarial robustness in Theorem~\ref{thm:convergence} \& ~\ref{thm:error-GP}.


\noindent\textbf{Main contributions}: 
\begin{itemize}[leftmargin=*]
    \item We design a new logit pairing loss to mitigate the $l_q - l_r$ tradeoff for better union accuracy, by enforcing the logit distributions of $l_q$ and $l_r$ adversarial examples to be close.
    \item We empirically and theoretically show that connecting NT with AT via gradient projection better balances the accuracy/robustness tradeoff for $l_p$ perturbations, compared with standard AT.
    \item \textbf{RAMP} achieves good union accuracy, accuracy-robustness tradeoff, and generalizes better to diverse perturbations and corruptions (Section~\ref{generality-main}) achieving superior \emph{universal robustness} ($75.5\%$ for common corruption and $26.1\%$ union accuracy against unseen adversaries). \textbf{RAMP} fine-tuned DNNs achieve union accuracy up to $53.3\%$ on CIFAR-10, and $29.1\%$ on ImageNet. \textbf{RAMP} achieves a $44.6\%$ union accuracy and good clean accuracy on ResNet-18 against AutoAttack on CIFAR-10. 
\end{itemize}

Our code is available at ~\url{https://github.com/uiuc-focal-lab/RAMP}.

\section{Related Work}
\noindent \textbf{Adversarial training (AT).} Adversarial Training (AT) usually employs gradient descent to discover adversarial examples, incorporating them into training for enhanced adversarial robustness~\citep{tramer2017ensemble,madry2017towards}. Numerous works focus on improving robustness by exploring the trade-off between robustness and accuracy~\citep{zhang2019trades,Wang2020mart}, instance reweighting~\citep{zhang2021_GAIRAT}, loss landscapes~\citep{wu2020awp}, wider/larger architectures~\citep{gowal2020uncovering, debenedetti2022adversarially}, data augmentation~\citep{carmon2019unlabeled, raghunathan2020understanding}, and using synthetic data~\citep{peng2023robust,pmlr-v202-wang23ad}. However, these methods often yield DNNs robust against a \emph{single} perturbation type while remaining vulnerable to other types.

\noindent \textbf{Robustness against multiple perturbations.} \citet{tramer2019atmp,kang2019transfer} observe that robustness against $l_p$ attacks does not necessarily transfer to other $l_q$ attacks ($q\neq p$). Previous studies~\citep{tramer2019atmp,maini2020msd,madaan2021sat,croce2022eat} modified Adversarial Training (AT) to enhance robustness against multiple $l_p$ attacks, employing average-case~\citep{tramer2019atmp}, worst-case~\citep{tramer2019atmp, maini2020msd}, and random-sampled~\citep{madaan2021sat, croce2022eat} defenses. There are also works~\citep{nandy2020approximate, liu2020towards, xu2021mixture, xiao2022adaptive, maini2022perturbation} using preprocessing, ensemble methods, mixture of experts, and stability analysis to solve this problem. Ensemble models and preprocessing methods are weakened since their performance heavily relies on correctly classifying or detecting various types of adversarial examples. Also, prior works are hard to scale to larger models and datasets, e.g. ImageNet, due to the efficiency issue. Furthermore,~\citet{croce2022eat} devise Extreme norm Adversarial Training (E-AT) and fine-tune a $l_p$ robust model on another $l_q$ perturbation to quickly make a DNN robust against multiple $l_p$ attacks. However, E-AT does not adapt to varying epsilon values. Our work demonstrates that the suboptimal tradeoff observed in prior studies can be improved with our proposed framework.

\noindent \textbf{Logit pairing in adversarial training.} Adversarial logit pairing methods encourage logits for pairs of examples to be similar~\citep{kannan2018logitpair,engstrom2018evaluatinglogit}. People apply this technique to both clean images and their adversarial counterparts, to devise a stronger form of adversarial training. In our work, we devise a novel logit pairing loss to train a DNN originally robust against $l_p$ attack to become robust against another $l_q(q\neq p)$ attack on the correctly predicted $l_p$ subsets, which helps gain better union accuracy.

\noindent \textbf{Adversarial versus distributional robustness.} ~\citet{sinha2018certifiable} theoretically studies the AT problem through distributional robust optimization.~\citet{mehrabi2021fundamental} establishes a pareto-optimal tradeoff between standard and adversarial risks by perturbing the test distribution. Other works explore the connection between natural and adversarial distribution shifts~\citep{moayeri2022explicit, alhamoud2023generalizability}, assessing transferability and generalizability of adversarial robustness across datasets. However, little research delves into distribution shifts induced by $l_1, l_2, l_\infty$ adversarial examples and their interplay with the robustness-accuracy tradeoff~\citep{zhang2019trades, yang2020closer, rade2021reducing}. Our work, inspired by recent domain adaptation techniques~\citep{jiang2023fda, jiang2023fedgp}, designs a logit pairing loss and utilizes model updates from NT via GP to enhance adversarial robustness. We show that GP adapts to both single and multi-norm scenarios.


\section{AT against Multiple Perturbations}
We consider a standard classification task with samples $\{(x_i, y_i)\}_{i=0}^{N}$ from an empirical data distribution $\widehat \gD_n$; we have input images $x \in \mathbb{R}^{d}$ and corresponding labels $y \in \mathbb{R}^k$. Standard training aims to obtain a classifier $f$ parameterized by $\theta$ to minimize a loss function $\mathcal{L}: \mathbb{R}^k \times \mathbb{R}^k \rightarrow \mathbb{R}$ on $\widehat \gD_n$. Adversarial training (AT)~\citep{madry2017towards,tramer2017ensemble} aims to find a DNN robust against adversarial examples. It is framed as a min-max problem where a DNN is optimized using the worst-case examples within an adversarial region around each $x_i$. Different types of adversarial regions $B_p (x, \epsilon_p)= \{x^{\prime} \in \mathbb{R}^d: \|x^{\prime} - x \|_p \leq \epsilon_p \}$ can be defined around a given image $x$ using various $l_{p}$-based perturbations. 
Formally, we can write the optimization problem of AT against a certain $l_p$ attack as follows:

\begin{equation}
    \min_{\theta} \mathbb{E}_{(x,y)\sim \widehat \gD_n}\left[\max_{x^{\prime} \in B_p (x, \epsilon_p)} \mathcal{L}(f(x^{\prime}),y)\right]
\end{equation}

The above optimization is only for certain $p$ values and is usually vulnerable to other perturbation types. To this end, prior works have proposed several approaches to train the network robust against multiple perturbations ($l_1, l_2, l_{\infty}$) at the same time. We focus on the union threat model $\Delta = B_1 (x, \epsilon_1) \cup B_2 (x, \epsilon_2) \cup B_\infty (x, \epsilon_\infty)$ which requires the DNN to be robust within the $l_1, l_2, l_{\infty}$ adversarial regions simultaneously~\citep{croce2022eat}. Union accuracy is then defined as the robustness against $\Delta_{(i)}$ for each $x_i$ sampled from $\mathcal{D}$. In this paper, similar to the prior works, we use union accuracy as the main metric to evaluate the multiple-norm robustness. Apart from that, we define \emph{universal robustness} as the generalization ability against a range of unseen adversaries and common corruptions. Specifically, we have average accuracy across five severity levels for common corruption and union accuracy against a range of unseen adversaries used in~\citet{laidlaw2020perceptual}.

\noindent \textbf{Worst-case defense} follows the following min-max optimization problem to train DNNs using the worst-case example from the $l_1, l_2, l_{\infty}$ adversarial regions:

\begin{equation}
   \min_{\theta} \mathbb{E}_{(x,y)\sim \widehat \gD_n}\left[\max_{p \in \{1,2,\infty\}} \max_{x^{\prime} \in B_p (x, \epsilon_p)} \mathcal{L}(f(x^{\prime}),y)\right]
\end{equation}

MAX~\citep{tramer2019atmp} and MSD~\citep{maini2020msd} fall into this category. Finding worst-case examples yields a good union accuracy but results in a loss of clean accuracy as the distribution of generated examples is different from the clean data distribution.


\noindent\textbf{Average-case defense} train DNNs using the average of the $l_1, l_2, l_{\infty}$ worst-case examples:

\begin{equation}
     \min_{\theta} \mathbb{E}_{(x,y)\sim \widehat \gD_n}\left[\mathbb{E}_{p \in \{1,2,\infty\}} \max_{x^{\prime} \in B_p (x, \epsilon_p)} \mathcal{L}(f(x^{\prime}),y)\right]
\end{equation}

AVG~\citep{tramer2019atmp} is of this type. This method generally leads to good clean accuracy but suboptimal union accuracy as it does not penalize worst-case behavior within the $l_1, l_2, l_{\infty}$ regions.

\noindent \textbf{Random-sampled defense.} The defenses mentioned above lead to a high training cost as they compute multiple attacks for each sample. SAT~\citep{madaan2021sat} and E-AT~\citep{croce2022eat} randomly sample one attack out of each type at a time, contributing to a similar computational cost as standard AT on a single perturbation model. They achieve a slightly better union accuracy compared with AVG and relatively good clean accuracy. However, they are not better than worst-case defenses for multiple-norm robustness, since they do not consider the strongest attack within the union region all the time.



\section{RAMP}
There are two main tradeoffs in achieving better union accuracy while maintaining good accuracy: 1. Among perturbations: there is a tradeoff among different attacks, e.g., a $l_\infty$ pre-trained AT DNN is not robust against $l_1, l_2$ perturbations, which makes the union accuracy harder to attain. Also, we observe there exists a main tradeoff pair of two attacks among the union over $l_1$, $l_2$, $l_\infty$ attacks. 2. Accuracy and robustness: all defenses lead to degraded clean accuracy. To address these tradeoffs, we study the problem from the lens of distribution shifts.


\noindent\textbf{Interpreting tradeoffs from the lens of distribution shifts.}
The adversarial examples with respect to an empirical data distribution $\widehat \gD_n$, adversarial region $B_p(x, \epsilon_p)$, and DNN $f_{\theta}$ generate a new adversarial distribution $\widehat \gD_a$ with samples $\{(x^{\prime}_i, y_i)\}_{i=0}^{N}$, that are correlated by adding certain perturbations but different from the original $\widehat \gD_n$. Because of the shifts between $\widehat \gD_n$ and $\widehat \gD_a$, DNN decreases performance on $\widehat \gD_n$ when we move away from it and towards $\widehat \gD_a$. Also, the distinct distributions created by multiple perturbations, $\widehat \gD_a^{l_1}$, $\widehat \gD_a^{l_2}$, $\widehat \gD_a^{l_\infty}$, contribute to the tradeoff among $l_1, l_2, l_\infty$ attacks. To address the tradeoff among perturbations while maintaining good efficiency, we focus on the distributional interconnections between $\widehat \gD_n$ and $\widehat \gD_a^{l_1}$, $\widehat \gD_a^{l_2}$, $\widehat \gD_a^{l_\infty}$. From the insights we get from above, we propose our framework \textbf{RAMP}, which includes (i) logit pairing to improve tradeoffs among multiple perturbations, and (ii) identifying and combining the useful DNN components using the model updates from NT and AT, to obtain a better robustness/accuracy tradeoff. 



\textbf{Identify the Key Tradeoff Pair.} We study the common case with $l_p$ norms $\epsilon_1=12, \epsilon_2=0.5, \epsilon_\infty=\frac{8}{255}$ on CIFAR-10~\citep{tramer2019atmp}. The distributions generated by the two strongest attacks show the largest shifts from $\widehat \gD_n$; also, they have the largest distribution shifts between each other because of larger and most distinct search areas. Thus, by calculating the ball volume~\citep{volball} for each attack, we select the two with the largest volumes as the key tradeoff pair. They refer to the strongest attack as the attacker has more search area. The attack with the smallest ball volume is mostly included by the convex hull of the other two stronger attacks~\citep{croce2022eat}. Here we identify $l_\infty - l_1$ as the key tradeoff pair.

\subsection{Logit Pairing for Multiple Perturbations}~\label{logit-pairing}
\noindent\textbf{Figure~\ref{fig:l1linf-tradeoff-intro}: Finetuning a $l_q$-AT model on $l_r$ examples reduces $l_q$ robustness.} To get a finer analysis of the $l_\infty - l_1$ tradeoff mentioned above, we visualize the changing of $l_1, l_2, l_{\infty}$ robustness of the training dataset when we fine-tune a $l_\infty$ pre-trained model with $l_1$ examples for $1$ epochs, as shown in Figure~\ref{fig:l1linf-tradeoff-intro}: x-axis represents the robustness against different attacks and y-axis is the accuracy. After 1 epoch of finetuning on $l_1$ examples or performing E-AT, we lose much $l_\infty$ robustness since blue/yellow histograms are much lower than the red histogram under the Linf category. RAMP preserves both $l_\infty$ and union robustness more effectively: the green histogram is higher than the red/yellow histogram under Linf and Union categories. Specifically, RAMP maintains 14\%, 28\% more union robustness than E-AT and $l_1$ fine-tuning. The above observations indicate the necessity of preserving more $l_q$ robustness as we adversarially fine-tune with $l_r$ adversarial examples on a $l_q$ pre-trained AT model, with $l_q - l_r$ as the key tradeoff pair, which inspires us to design our loss design with logit pairing. We want to enforce the \emph{union predictions} between $l_q$ and $l_r (q \neq r)$ attacks: bringing the predictions of $l_q$ and $l_r (q \neq r)$ close to each other, specifically on the correctly predicted $l_q$ subsets. Based on our observations, we design a new logit pairing loss to enforce a DNN robust against one $l_q$ attack to be robust against another $l_r (q \neq r)$ attack. 

%


\noindent\textbf{Enforcing the Union Prediction via Logit Pairing.}
The $l_q - l_r (q\neq r)$ tradeoff leads us to the following principle to improve union accuracy: \emph{for a given set of images, when we have a DNN robust against some $l_q$ examples, we want it to be robust against $l_r$ examples as well.} This serves as the main insight for our loss design: we want to \emph{enforce} the logits predicted by $l_q$ and $l_r$ adversarial examples to be close, specifically on the correctly predicted $l_q$ subsets. To accomplish this, we design a KL-divergence (KL) loss between the predictions from $l_q$ and $l_r$ perturbations. For each batch of data $(x,y) \sim \mathcal{D}$, we generate $l_q$ and $l_r$ adversarial examples $x^\prime_{q}, x^\prime_{r}$ and their predictions $p_{q}, p_{r}$ using APGD~\citep{croce2020aa}. Then, we select indices $\gamma$, which part elements of $p_{q}$ correctly predicts the ground truth $y$. We denote the size of the indices as $n_{c}$, and the batch size as $N$. We compute a KL-divergence loss over this set of samples using $KL(p_{q}[\gamma] \| p_{r}[\gamma])$ (Eq.~\ref{eq:5}). For the subset indexed by $\gamma$, we want to push its $l_r$ logit distribution towards its $l_q$ logit distribution, such that we prevent losing more $l_q$ robustness when training with $l_r$ adversarial examples.

\begin{equation} \label{eq:5}
 \mathcal{L}_{KL} = \frac{1}{n_c} \cdot \sum^{n_c}_{i=1} \sum^{k}_{j=0}  p_{q}[\gamma[i]][j] \cdot \log \left( \frac{p_{q}[\gamma[i]][j]}{p_{r}[\gamma[i]][j]}\right)
\end{equation}

To further boost the union accuracy, apart from the KL loss, we add another loss term using a MAX-style approach in Eq.~\ref{eq:6}: we find the worst-case example between $l_q$ and $l_r$ adversarial regions by selecting the example with the higher loss. $\mathcal{L}_{max}$ is a cross-entropy loss over the approximated worst-case adversarial examples. Here, we use $\mathcal{L}_{ce}$ to represent the cross-entropy loss. Our final loss $\mathcal{L}$ combines $\mathcal{L}_{KL}$ and $\mathcal{L}_{max}$, via a hyper-parameter $\lambda$  in Eq.~\ref{eq:7}.

\begin{minipage}{.60\linewidth}
\begin{equation} \label{eq:6}
 \mathcal{L}_{max} = \frac{1}{N} \sum^{N}_{i=0} \left[\max_{p \in \{q,r\}} \max_{x_i^{\prime} \in B_p (x, \epsilon_p)} \mathcal{L}_{ce}(f(x_{i}^{\prime}),y_i)\right]
\end{equation}
\end{minipage}%
\begin{minipage}{.40\linewidth}
\begin{equation} \label{eq:7}
 \mathcal{L} = \mathcal{L}_{max} + \lambda \cdot \mathcal{L}_{KL}
\end{equation}
\end{minipage}

Algorithm~\ref{alg:robust-finetune} shows the pseudocode of robust fine-tuning with \textbf{RAMP} that leverages logit pairing. 

\subsection{Connecting Natural Training with AT}~\label{nt-at}
To improve the robustness and accuracy tradeoff against multiple perturbations, we explore the connections between AT and NT. Since extracting valuable information in NT aids in improving robustness (Section~\ref{sec:nt-can-help-adv}), we use gradient projection~\citep{jiang2023fedgp} to compare and integrate natural and adversarial model updates, which yields an improved tradeoff between robustness and accuracy.


\noindent\textbf{NT can help adversarial robustness.}\label{sec:nt-can-help-adv}
Let us consider two models $f_1$ and $f_2$, where $f_1$ is randomly initialized and $f_2$ undergoes NT on $\widehat \gD_n$ for $k$ epochs: $f_2$ results in a better \emph{decision boundary} and higher clean accuracy. Performing AT on $f_1$ and $f_2$ subsequently, intuitively, $f_2$ becomes more robust than $f_1$ due to its improved decision boundary, leading to fewer misclassifications of adversarial examples. This effect is empirically shown in Figure~\ref{fig:at-gp-effect}. For \textbf{AT} (blue), standard AT against $l_\infty$ attack~\citep{madry2017towards} is performed, while for \textbf{AT-pre} (red), $50$ epochs of pre-training precede the standard AT procedure. \textbf{AT-pre} shows superior clean and robust accuracy on CIFAR-10 against $l_\infty$ PGD-20 attack with $\epsilon_{\infty} = 0.031$. Despite $\widehat \gD_n$ and $\widehat \gD_a$ are different, Figure~\ref{fig:at-gp-effect} suggests valuable information in $\widehat \gD_n$ that potentially enhances performance on $\widehat \gD_a$.

\noindent\textbf{AT with Gradient Projection.}
To connect NT with AT more effectively, we analyze the training procedures on $\widehat \gD_n$ and $\widehat \gD_a$. We consider model updates over all samples from $\widehat \gD_n$ and $\widehat \gD_a$, with the initial model $f^{(r)}$ at epoch $r$, and models $f_{n}^{(r)}$ and $f_{a}^{(r)}$ after $1$ epoch of natural and adversarial training from the same starting point $f^{(r)}$, respectively. Here, we compare the natural updates $\widehat g_{n} = f_{n}^{(r)} - f^{(r)}$ and adversarial updates $\widehat g_{a} = f_{a}^{(r)} - f^{(r)}$. Due to distribution shift, an \emph{angle} exists between them. Our goal is to identify useful components from $g_{n}$ and incorporate them into $g_{a}$ for increased robustness in $\widehat \gD_a$ while maintaining accuracy in $\widehat \gD_n$. Inspired by~\citet{jiang2023fedgp}, we \emph{layer-wisely} compute the cosine similarity between $\widehat g_{n}$ and $\widehat g_{a}$. For a specific layer $l$ of $\widehat g_{n}^{l}$ and $\widehat g_{a}^{l}$, we preserve a portion of $\widehat g_{n}^{l}$ based on their cosine similarity score (Eq.\ref{eq:1}). Negative scores indicate that $\widehat g_{n}^{l}$ is not beneficial for robustness in $\widehat \gD_a$. Therefore, we filter components with similarity score $\leq 0$. We define the \textbf{GP} (Gradient Projection) operation in Eq.\ref{eq:2} by projecting $\widehat g_{a}^{l}$ towards $\widehat g_{n}^{l}$. 


\begin{minipage}{.35\linewidth}
  \begin{equation}\label{eq:1} 
    \cos (\widehat g_{n}^l, \widehat g_{a}^l) =  {\widehat g_{n}^l \cdot \widehat g_{a}^l \over \|\widehat g_{n}^l\| \|\widehat g_{a}^l\|}  
\end{equation}
\end{minipage}%
\begin{minipage}{.65\linewidth}
  \begin{equation}\label{eq:2}
    \small
    \mathbf{GP} (\widehat g_{n}^l,\widehat g_{a}^l)=
    \begin{cases}
         \cos (\widehat g_{n}^l,\widehat g_{a}^l) \cdot \widehat g_{n}^l,  &\cos (\widehat g_{n}^l, \widehat g_{a}^l) > 0 \\
        0, &\cos (\widehat g_{n}^l, \widehat g_{a}^l) \leq 0
    \end{cases}
\end{equation}
\end{minipage}

Therefore, the total projected (useful) model updates $g_p$ coming from $\widehat g_{n}$ could be computed as Eq.~\ref{eq:3}. We use $\mathcal{M}$ to denote all layers of the current model update. Note that $\bigcup_{l \in \mathcal{M}}$ concatenates all layers' useful natural model update components. A hyper-parameter $\beta$ is used to balance the contributions of $g_{GP}$ and $\widehat g_{a}$, as shown in Eq.~\ref{eq:4}. By finding a proper $\beta$ (0.5 as in Figure~\ref{fig:ablation-beta}), we can obtain better robustness on $\widehat \gD_a$, as shown in Figure~\ref{fig:at-gp-effect} and Figure~\ref{atgp}. In Figure~\ref{fig:at-gp-effect}, with $\beta=0.5$, \textbf{AT-GP} refers to AT with GP; for \textbf{AT-GP-pre}, we perform $50$ epochs of NT before doing \textbf{AT-GP}. We see \textbf{AT-GP} obtains a better accuracy/robustness tradeoff than \textbf{AT}. We observe a similar trend for \textbf{AT-GP-pre} vs. \textbf{AT-pre}. Further, in Figure~\ref{atgp}, \textbf{RN-18 $l_\infty$-GP} achieves good clean accuracy and better robustness than \textbf{RN-18 $l_\infty$} against AutoAttack~\citep{croce2020aa}.

\begin{minipage}{.35\linewidth}
\begin{equation}\label{eq:3}
   g_{p} =  \bigcup_{l \in \mathcal{M}} \mathbf{GP}(\widehat g_{n}^l,\widehat g_{a}^l)
\end{equation}
\end{minipage}%
\begin{minipage}{.65\linewidth}
\begin{equation} \label{eq:4}
f^{(r+1)} = f^{(r)} + \beta \cdot g_{p} + (1 - \beta) \cdot \widehat g_{a}
\end{equation} 
\end{minipage}


\begin{minipage}[t]{0.48\linewidth}
\begin{algorithm}[H]
   \caption{Fine-tuning via Logit Pairing
   }\label{alg:robust-finetune}
\begin{algorithmic}[1]
    \STATE \textbf{Input}: model $f$, input samples $(x, y)$ from distribution $\widehat \gD_n$, fine-tuning rounds $R$, hyper-parameter $\lambda$, adversarial regions $B_q, B_r$ with size $\epsilon_q$ and $\epsilon_r$, \textbf{APGD} attack.\\
        \FOR{$r=1, 2,..., R$}
        \FOR{$(x ,y) \sim \text{ training set } \mathcal{D}$}
        \STATE $x^\prime_{q}, p_{q} \gets \textbf{APGD}(B_q(x, \epsilon_q), y)$
        \STATE $x^\prime_{r}, p_{r} \gets \textbf{APGD}(B_r(x, \epsilon_r), y)$
        \STATE $\gamma \gets where(argmax\text{ }p_{q} =y)$
        \STATE $n_c \gets \gamma.size()$
        \STATE calculate $\mathcal{L}$ using Eq.~\ref{eq:7} and update $f$
        \ENDFOR
        \ENDFOR
        \STATE \textbf{Output}: model $f$. 
\end{algorithmic}
\end{algorithm}
\end{minipage}\hfill
\begin{minipage}[t]{0.45\linewidth}
\begin{algorithm}[H]
    \caption{Connect AT with NT via GP}
    \begin{algorithmic}[1]
        \STATE \textbf{Input}: model $f$, input images with distribution $\widehat \gD_n$, training rounds $R$, adversarial region $B_p$ and its size $\epsilon_p$, $\beta$, natural training \textbf{NT} and adversarial training \textbf{AT}.\\
        \FOR{$r=1, 2,..., R$}
        \STATE $f_{n} \gets \textbf{NT}(f^{(r)}, \mathcal{D})$  
        \STATE $f_{a} \gets \textbf{AT}(f^{(r)}, B_p, \epsilon_p, \mathcal{D})$
        \STATE compute $\widehat g_{n} \gets f_{n} - f^{(r)}$, $\widehat g_{a} \gets f_{a} - f^{(r)}$
        \STATE compute $g_p$ using Eq.~\ref{eq:3}
        \STATE update $f^{(r+1)}$ using Eq.~\ref{eq:4} with $\beta$ and $\widehat g_{a}$
        \ENDFOR
        \STATE \textbf{Output}: model $f$. 
    \end{algorithmic}
    \label{algo:atgp}
\end{algorithm}
\end{minipage}

\subsection{Theoretical Analysis of GP for Adversarial Robustness} We define \( \gD_n \) $=\{(x_i, y_i)\}_{i=0}^{\infty}$ as the ideal data distribution with an infinite cardinality. Here, we consider a classifier $f_\theta$ at epoch $t$. We define $\gD_a$ as the distribution created by \(\{ (x_i + \mathcal{\epsilon}(f_\theta, x_i, y_i), y_i)\}^{\infty}_{i=0}\) where $(x_i,y_i) \sim \gD_n$. \( x_i + \mathcal{\epsilon}(f_\theta, x_i, y_i) \) denotes the perturbed image, which could be both single and multiple perturbations based on \( f_\theta \) itself.

\begin{assumption} We assume $\widehat\gD_n$ consists of $N$ i.i.d. samples from the ideal distribution $\gD_n$ and $\widehat\gD_a=\{(x_i + \epsilon(f^\theta, x_i, y_i), y_i)\}^{N}_{i=0}$ where $(x_i,y_i) \sim \widehat\gD_n$ consists of $N$ i.i.d. samples from $\gD_a$. 
\end{assumption}

We define the population loss as $\mathcal{L}_\gD(\theta):=\E_{(x,y)\sim \gD}\mathcal{L}(f(x), y)$, and let $g_\gD(\theta):=\nabla\mathcal{L}_\gD (\theta).$ For simplification, we use $g_a :=\nabla\mathcal{L}_{\gD_a} (\theta)$, $\widehat g_a:=\nabla\mathcal{L}_{\widehat\gD_a} (\theta)$, and $\widehat g_n :=\nabla\mathcal{L}_{\widehat\gD_n} (\theta)$. $g_{GP} = \beta \cdot g_{p} + (1 - \beta) \cdot \widehat g_{a}$ (Definition~\ref{def:gp_aggr}) is the aggregation using GP. We define the following optimization problem.



\begin{definition}[Aggregation for NT and AT]\label{def:aggr} $f_{\theta}$ is trained by iteratively updating the parameter
\begin{align}
    \theta \leftarrow \theta - \mu \cdot \texttt{Aggr}(\widehat g_a, \widehat g_n),
\end{align}
where $\mu$ is the step size. We seek an aggregation rule $\texttt{Aggr}(\cdot) = \widehat g_{\texttt{Aggr}}$ such that after training, $f_{\theta}$ minimizes the population loss function  $\mathcal{L}_{\gD_a}(\theta)$. 
\end{definition}

We need $\widehat g_{\texttt{Aggr}}$ to be close to $g_a$ for each iteration, since $g_a$ is the optimal update on $\gD_a$. Thus, we define $L^\pi$-Norm and delta error to indicate the performance of different aggregation rules.

\begin{definition}[$L^\pi$-Norm~\citep{anonymous2024principled}]
Given a distribution $\pi$ on the parameter space $\theta$, we define an inner product $\langle g_\gD, g_{\gD'}\rangle_\pi=\E_{\theta\sim \pi}[\langle g_\gD(\theta),g_{\gD'}(\theta)\rangle]$. The inner product induces the $L^\pi$-norm on $g_\gD$ as $\|g_\gD\|_\pi:=\sqrt {\E_{\theta\sim \pi}\|g_\gD(\theta)\|^2}$. We use $L^\pi$-norm to measure the gradient differences under certain $\gD$.

\end{definition}

\begin{definition}[Delta Error of an aggregation rule \texttt{Aggr}$(\cdot)$] \label{def:delta-error} We define the following squared error term to measure the closeness between $\widehat g_{\texttt{Aggr}}$ and $g_a$ under $\widehat\gD_a^t$ (distribution at time step $t$), i.e.,
    \begin{align}
        \Delta^2_{\texttt{Aggr}}:= \E_{\widehat\gD_a^t}\|g_{a}-\widehat g_{\texttt{Aggr}}\|^2_\pi.
    \end{align}
\end{definition}

 Delta errors $\Delta^2_{{AT}}$ and $\Delta^2_{{GP}}$ measure the closesness of $g_{GP}, \widehat g_a$ from $g_a$ in $\widehat\gD_a$ at each iteration. 

 \begin{theorem}[Error Analysis of GP]\label{thm:error-GP}
When the model dimension $m \to \infty$, for an epoch $t$, we have an approximation of the error difference $\Delta^2_{{AT}} - \Delta^2_{{GP}}$ as follows
\begin{align} 
    \Delta^2_{AT} - \Delta^2_{GP} \approx  \beta (2 - \beta) \E_{\widehat\gD_a^t} \|g_{a} - \widehat{g_{a}} \|^2_\pi - \beta^2 \bar\tau^2 \|g_{a} -  \widehat g_{n}\|^2_\pi\label{eq:thm-fedgp}
\end{align}
$\bar \tau^2= \E_\pi[\tau^2] \in [0, 1]$, where $\tau (\theta)$ is the $\sin(\cdot)$ value of the angle between $\widehat g_{n}$ and $g_{a}- \widehat g_{n}$. 
\end{theorem}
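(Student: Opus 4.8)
The plan is to expand the two delta errors explicitly, collapse their difference into a single inner product plus a single squared norm built from the vector $g_p-\widehat g_a$, and then evaluate those two quantities in the regime $m\to\infty$, where the sampling noise $g_a-\widehat g_a$ decouples from every fixed ``signal'' direction. Concretely, write the GP aggregate as $g_{GP}=\widehat g_a+\beta\,(g_p-\widehat g_a)$, so that $g_a-g_{GP}=(g_a-\widehat g_a)-\beta\,(g_p-\widehat g_a)$; expanding the $L^\pi$-norm by bilinearity of $\langle\cdot,\cdot\rangle_\pi$ gives, for each draw of $\widehat\gD_a^t$,
\[
\|g_a-\widehat g_a\|_\pi^2-\|g_a-g_{GP}\|_\pi^2=2\beta\,\langle g_a-\widehat g_a,\,g_p-\widehat g_a\rangle_\pi-\beta^2\,\|g_p-\widehat g_a\|_\pi^2 .
\]
Taking $\E_{\widehat\gD_a^t}$ reduces the theorem to estimating the two terms on the right.

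The geometry of $g_p-\widehat g_a$ comes next. By Eq.~\ref{eq:2}--\ref{eq:3}, on every layer that survives the nonnegativity clipping the GP rule keeps (up to a norm normalization we suppress) the orthogonal projection of $\widehat g_a$ onto the direction $\widehat g_n$, i.e.\ the $\widehat g_n$-parallel part of $\widehat g_a$; treating the parameter vector as a single block in the limit this gives $g_p-\widehat g_a=-\widehat g_a^{\perp}$, where $\widehat g_a^{\perp}$ is the component of $\widehat g_a$ orthogonal to $\widehat g_n$. Set $\xi:=g_a-\widehat g_a$ and decompose relative to $\widehat g_n$, so $\widehat g_a^{\perp}=g_a^{\perp}-\xi^{\perp}$. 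As $m\to\infty$, the concentration underlying the $L^\pi$-norm framework of \citet{anonymous2024principled} forces the mean-zero noise $\xi$ to be asymptotically orthogonal to each fixed direction $\widehat g_n,g_a$ (hence to $\widehat g_a$ and to $g_a^{\perp}$), so $\langle\xi,g_a^{\perp}\rangle_\pi\to0$, $\langle g_a^{\perp},\xi^{\perp}\rangle_\pi\to0$, and $\|\xi^{\perp}\|_\pi^2\to\|\xi\|_\pi^2$; therefore
\[
\langle g_a-\widehat g_a,\,g_p-\widehat g_a\rangle_\pi=-\langle\xi,g_a^{\perp}\rangle_\pi+\|\xi^{\perp}\|_\pi^2\ \longrightarrow\ \|\xi\|_\pi^2,\qquad \|g_p-\widehat g_a\|_\pi^2\ \longrightarrow\ \|g_a^{\perp}\|_\pi^2+\|\xi\|_\pi^2 .
\]
Since $g_a^{\perp}$ is the part of $g_a=\widehat g_n+(g_a-\widehat g_n)$ orthogonal to $\widehat g_n$, planar trigonometry gives $\|g_a^{\perp}\|^2=\sin^2\!\angle(\widehat g_n,g_a-\widehat g_n)\,\|g_a-\widehat g_n\|^2=\tau(\theta)^2\,\|g_a-\widehat g_n\|^2$; averaging over $\pi$ and approximating the product of the two $\theta$-dependent factors by the product of their means yields $\|g_a^{\perp}\|_\pi^2\approx\bar\tau^2\,\|g_a-\widehat g_n\|_\pi^2$.

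Assembling: substituting these into the reduction and taking $\E_{\widehat\gD_a^t}$,
\[
\Delta^2_{AT}-\Delta^2_{GP}\approx 2\beta\,\E_{\widehat\gD_a^t}\|\xi\|_\pi^2-\beta^2\bigl(\bar\tau^2\|g_a-\widehat g_n\|_\pi^2+\E_{\widehat\gD_a^t}\|\xi\|_\pi^2\bigr)=\beta(2-\beta)\,\E_{\widehat\gD_a^t}\|g_a-\widehat g_a\|_\pi^2-\beta^2\bar\tau^2\|g_a-\widehat g_n\|_\pi^2 ,
\]
which is exactly \eqref{eq:thm-fedgp}.

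The main obstacle is the decoupling step: making precise in what sense $m\to\infty$ makes the sampling noise $\xi=g_a-\widehat g_a$ orthogonal to the signal directions, and in particular to $\widehat g_n$, which is itself random and correlated with $\widehat g_a$ through the shared samples. This requires either the explicit noise/concentration model of \citet{anonymous2024principled} (bounding $\langle\xi,\widehat g_n\rangle/(\|\xi\|\,\|\widehat g_n\|)$ and $\langle\xi,g_a\rangle/(\|\xi\|\,\|g_a\|)$ uniformly in $\theta$) or an assumption that the natural-gradient signal dominates its own sampling fluctuation. Two smaller points also need care and are where the imprecise ``$\approx$'' is actually incurred: reconciling the exact GP formula (which scales $\widehat g_n$ by $\cos(\widehat g_n,\widehat g_a)$ rather than using the literal orthogonal projection $\mathrm{proj}_{\widehat g_n}\widehat g_a$) and passing from layerwise clipping to a single-block projection in Step 2; and the product-of-means approximation $\E_\pi[\tau^2\|g_a-\widehat g_n\|^2]\approx\bar\tau^2\|g_a-\widehat g_n\|_\pi^2$.
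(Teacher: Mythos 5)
Your derivation is correct at the level of rigor the statement admits, but it takes a genuinely different route from the paper. The paper's own proof is essentially two lines of algebra: it imports the closed forms $\Delta^2_{AT}=\E_{\widehat\gD_a}\|g_a-\widehat g_a\|^2_\pi$ and $\Delta^2_{GP}\approx\bigl((1-\beta)^2+\tfrac{2\beta-\beta^2}{m}\bigr)\E_{\widehat\gD_a}\|g_a-\widehat g_a\|^2_\pi+\beta^2\bar\tau^2\|g_a-\widehat g_n\|^2_\pi$ as black-box lemmas from \citet{anonymous2024principled} (Lemmas~\ref{lemma:GP} and~\ref{lemma:AT}), subtracts them, uses $1-(1-\beta)^2=\beta(2-\beta)$, and lets $m\to\infty$. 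You instead reconstruct the content of those lemmas: the bilinear expansion $\|u\|^2_\pi-\|u-\beta v\|^2_\pi=2\beta\langle u,v\rangle_\pi-\beta^2\|v\|^2_\pi$ with $u=g_a-\widehat g_a$, $v=g_p-\widehat g_a=-\widehat g_a^{\perp}$, the identification $\|g_a^{\perp}\|^2=\tau^2\|g_a-\widehat g_n\|^2$ (which is exactly where the paper's otherwise unexplained $\tau$ comes from), and the high-dimensional decoupling of the sampling noise $\xi$ from the signal directions. Your algebra checks out and reproduces \eqref{eq:thm-fedgp} exactly; it is also consistent with the finite-$m$ correction in Lemma~\ref{lemma:GP}, since retaining $\|\xi^\parallel\|^2\approx\|\xi\|^2/m$ regenerates the $O(1/m)$ term. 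What your approach buys is a self-contained and more informative argument that exposes precisely where the approximations live; what it costs is that the decoupling step ($\langle\xi,g_a^{\perp}\rangle_\pi\to0$, $\|\xi^{\perp}\|^2_\pi\to\|\xi\|^2_\pi$) and the factorization $\E_\pi[\tau^2\|g_a-\widehat g_n\|^2]\approx\bar\tau^2\|g_a-\widehat g_n\|^2_\pi$ are left as heuristics, whereas the paper hides them inside the cited lemmas. You correctly flag the one real wrinkle: the main-text rule \eqref{eq:2} scales $\widehat g_n^l$ by $\cos(\widehat g_n^l,\widehat g_a^l)$ layer-wise, which coincides with the true orthogonal projection $\texttt{Proj}_+(\widehat g_a|\widehat g_n)$ of Definition~\ref{def:gp_aggr} only up to the ratio $\|\widehat g_a^l\|/\|\widehat g_n^l\|$; your argument (like the paper's lemma) proves the statement for the projection form, and the positive-part clipping is silently assumed not to bind. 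These caveats are inherited from the source lemmas rather than introduced by you, so I consider the proposal a valid alternative proof.
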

 
Theorem~\ref{thm:error-GP} shows $\Delta^2_{{GP}}$ is generally smaller than $\Delta^2_{{AT}}$ for a large model dimension during each iteration, as is the case for the models in our evaluation, with $\beta = 0.5$, since $\beta(1-\beta) > \beta^2 (0.75 > 0.25)$ and the small value of $\bar\tau$ in practice (see Interpretation of Theorem~\ref{thm:error-diff} in Appendix~\ref{proof-theory}, where we show the order of difference is between $1e^{-8}$ and $1e^{-12}$). Thus, GP achieves better robust accuracy than AT by achieving a smaller delta error; GP also obtains good clean accuracy by combining parts of the model updates from the clean distribution $\widehat \gD_n$. Further, we provide an error analysis of a single gradient step in Theorem~\ref{thm-converge-gen} and convergence analysis in Theorem~\ref{thm:convergence}, showing that a smaller Delta error results in better convergence. The full proof of all theorems is in Appendix~\ref{proof-theory}. 


\begin{figure}[h!]
\vspace*{-\baselineskip}
\begin{minipage}[t]{.60\linewidth}
  \begin{subfigure}[b!]{.5\linewidth}
    \centering
    \includegraphics[width=\linewidth]{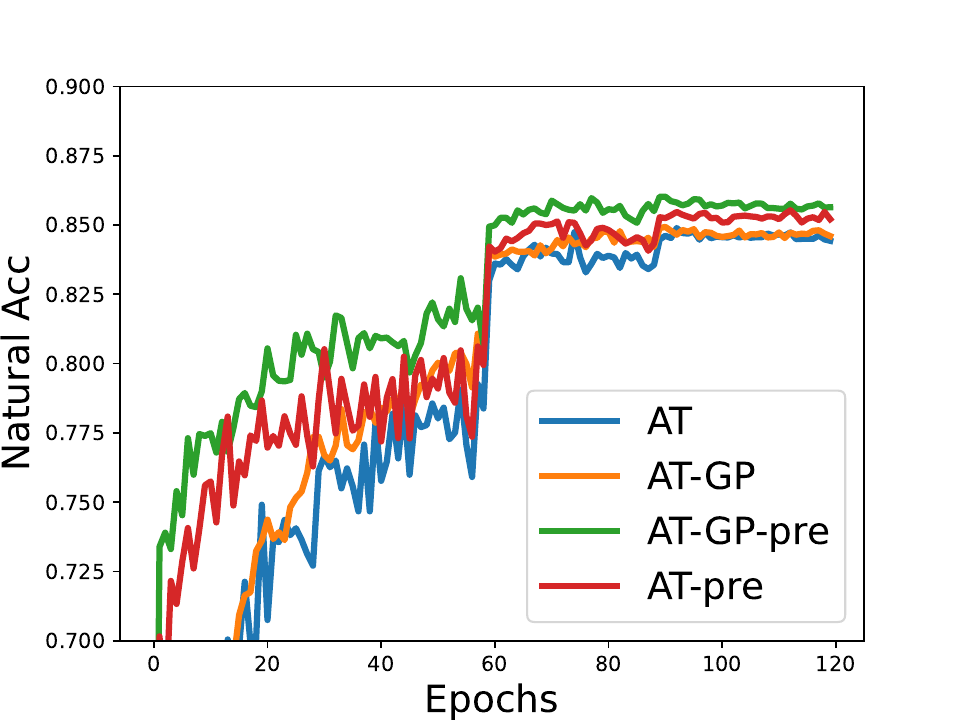}
    \caption{Clean Accuracy}
    \label{fig-at-gp-clean}
  \end{subfigure}\hfill
  \begin{subfigure}[b!]{.5\linewidth}
    \centering
    \includegraphics[width=\linewidth]{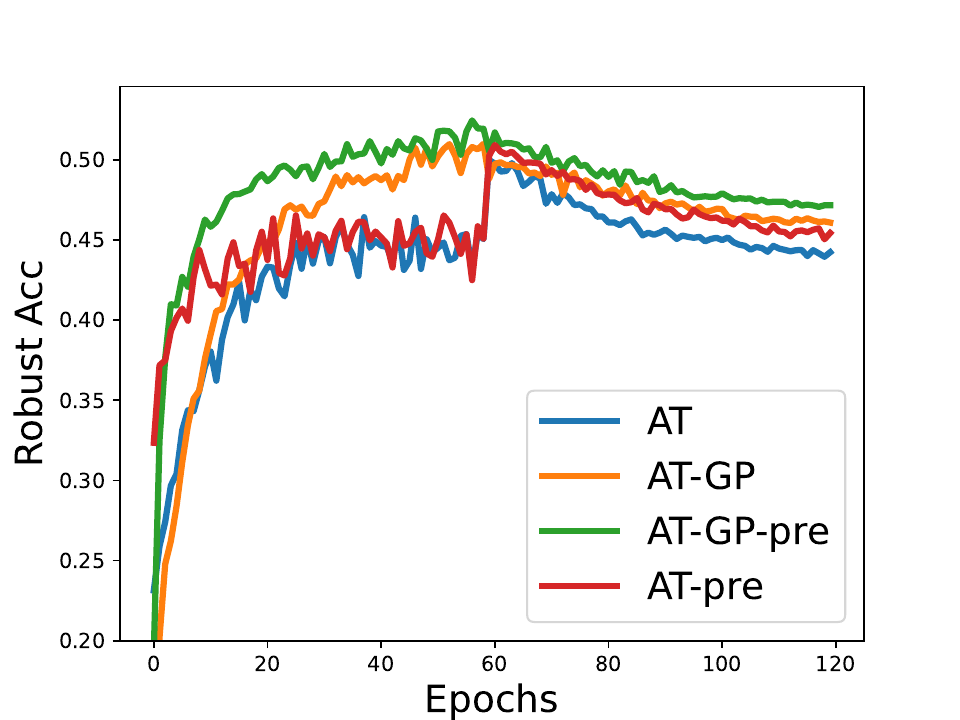}
    \caption{Robust Accuracy: PGD-20}
    \label{fig-at-gp-robust}
\end{subfigure}
  \caption{$l_\infty$ \textbf{AT-GP} with PGD~\citep{madry2017towards} with $\epsilon=0.031$ on CIFAR-10 improves accuracy and robustness. Pre-training on $\widehat \gD_n$ for $50$ epochs further boosts the performance.} 
  \label{fig:at-gp-effect}
\end{minipage}\hfill
\begin{minipage}[t]{.38\linewidth}
\centering
\fontsize{8.5}{10}\selectfont
\begin{tabular}{lrr}
                  & Clean & $l_\infty$ \\\hline
RN-18 $l_\infty$        & $84.2$                   & $47.4$                                \\
RN-18 $l_\infty$-GP     & $84.5$                   & {$\bf48.3$}                      \\
RN-18 $l_\infty$-GP-pre & {$\bf84.9$}          & {$\bf48.3$}              
\\\hline
\end{tabular}
\caption{$l_\infty$ \textbf{AT-GP} with APGD~\citep{croce2020aa} improves robustness against $l_\infty$ AutoAttack~\citep{croce2020aa} with $\epsilon = \frac{8}{255}$. RN-18 $l_\infty$-GP uses \textbf{AT-GP}; RN-18 $l_\infty$-GP-pre pre-trains $40$ epochs on $\widehat \gD_n$ before \textbf{AT-GP} is applied.}\label{atgp}
\end{minipage}
\vspace*{-\baselineskip}
\end{figure}

We outline the \textbf{AT-GP} method in Algorithm~\ref{algo:atgp} and it can be extended to the multiple-norm scenario. The overhead of this algorithm comes from natural training and GP operation. Their costs are small, and we discuss this more in Section~\ref{sec:ablation}. Combining logit pairing and gradient projection methods, we provide the \textbf{RAMP} framework which is similar to Algorithm~\ref{algo:atgp}, except that we replace line 4 of Algorithm~\ref{algo:atgp} as Algorithm~\ref{alg:robust-finetune} line 3-9. 




\section{Experiment}



\noindent \textbf{Datasets, baselines, and models.} CIFAR-10~\citep{krizhevsky2009cifar10} includes $60$K images with $50$K and $10$K images for training and testing respectively. ImageNet has $\approx 14.2$M images and $1$K classes, containing $\approx 1.3$M training, $50$K validation, and $100$K test images~\citep{russakovsky2015imagenet}. We compare \textbf{RAMP} with following baselines: 1. \textbf{SAT}~\citep{madaan2021sat}: randomly sample one of the $l_1$, $l_2$, $l_\infty$ attacks. 2. \textbf{AVG}~\citep{tramer2019atmp}: take the average of $l_1, l_2, l_\infty$ examples. 3. \textbf{MAX}~\citep{tramer2019atmp}: take the worst of $l_1, l_2, l_\infty$ attacks. 4. \textbf{MSD}~\citep{maini2020msd}: find the worst-case examples over $l_1, l_2, l_\infty$ steepest descent directions during each step of inner maximization. 5. \textbf{E-AT}~\citep{croce2022eat}: randomly sample between $l_1$, $l_\infty$ attacks. For models, we use PreAct-ResNet-18, ResNet-50, WideResNet-34-20, and WideResNet-70-16 for CIFAR-10, as well as ResNet-50 and XCiT-S transformer for ImageNet.

\noindent \textbf{Implementations and Evaluation.} For AT from scratch for CIFAR-10, we train PreAct ResNet-18~\citep{he2016resnet} with a $lr=0.05$ for $70$ epochs and $0.005$ for $10$ more epochs. We set $\lambda=2$, $\beta=0.5$ for training from scratch, and $\lambda=0.5$ for robust fine-tuning. For all methods, we use $10$ steps for the inner maximization in AT. For ImageNet, we perform $1$ epoch of fine-tuning and use a learning rate $lr = 0.005$, $\lambda=0.5$ for ResNet-50 and $lr = 1e^{-4}$, $\lambda=0.5$ for XCiT-S models. We reduce the rate by a factor of $10$ every $\frac{1}{3}$ of the training epoch and set the weight decay to $1e^{-4}$. We use APGD with $5$ steps for $l_\infty$ and $l_2$, $15$ steps for $l_1$. Settings are similar to~\citep{croce2022eat}. We use the standard values of $\epsilon_1=12, \epsilon_2=0.5, \epsilon_\infty=\frac{8}{255}$ for CIFAR-10 and $\epsilon_1=255, \epsilon_2=2, \epsilon_\infty=\frac{4}{255}$ for ImageNet. We focus on $l_\infty$-AT models for fine-tuning, as \citet{croce2022eat} shows their higher union accuracy for the $\epsilon$ values in our evaluation. We report the clean accuracy, robust accuracy against $\{l_1, l_2, l_\infty\}$ attacks, union accuracy, universal robustness against common corruptions and unseen adversaries, as well as runtime for \textbf{RAMP}. The robust accuracy is evaluated using Autoattack~\citep{croce2020aa}. More implementation details are in Appendix~\ref{supp}.




\subsection{Main Results}
\begin{table}[!h]
    \caption{\textbf{Different epsilon values}: \textbf{RAMP} consistently outperforms E-AT and MAX for both training from scratch and robust fine-tuning when the key tradeoff pair changes.}
\centering
\fontsize{7.5}{8}\selectfont
\begin{tabular}{llrrrrr|rrrrr}
                                       & \multicolumn{1}{c}{} & \multicolumn{5}{c}{$(12, 0.5, \frac{2}{255})$}                               & \multicolumn{5}{c}{$(12, 1.5,\frac{8}{255})$}                      \\
                                       &                      & Clean & $l_\infty$ & $l_2$   & $l_1$            & Union         & Clean & $l_\infty$ & $l_2$   & $l_1$   & Union         \\\hline
\multirow{2}{*}{Training from Scratch} & E-AT                 & 87.2  & 73.3 & 64.1 & 55.4 & 55.4          & 83.5  & 41.0   & 25.5 & 52.9 & 25.5          \\
& MAX  & 85.6  & 72.1 & 63.6 & 56.4          & 56.4 & 74.6  & 42.9 & 35.7 & 50.3 & 35.6        \\
                                       & \textbf{RAMP}                 & 86.3  & 73.3 & 64.9 & 59.1 & \textbf{59.1} & 74.4  & 43.4 & 37.2 & 51.1 & \textbf{37.1} \\\hline
\multirow{2}{*}{Robust Fine-tuning}    & E-AT                 & 86.5  & 74.8 & 66.7 & 57.9          & 57.9          & 80.2  & 42.8 & 31.5 & 52.4 & 31.5          \\
& MAX  & 85.7  & 74.0   & 66.2 & 60.0            & 60.0            & 74.8  & 43.8 & 36.7 & 50.2 & 36.6          \\
                                       & \textbf{RAMP}               & 85.8  & 74.0 & 66.2 & 60.1 & \textbf{60.1} & 74.9  & 43.7 & 37.0 & 50.2 & \textbf{36.9}\\\hline  
\end{tabular}
\vspace*{-\baselineskip}
\label{table:more-eps-values}
\end{table}

\begin{table}[h!]
\vspace*{-\baselineskip}
\centering
\caption{\textbf{Robust fine-tuning on larger models and datasets} (* uses extra data for pre-training). We evaluate all CIFAR-10 and Imagenet test points. \textbf{RAMP} consistently achieves better union accuracy with significant margins and good accuracy-robustness tradeoff.}
\fontsize{6}{7}\selectfont
\begin{tabular}{lllrrrrr}
         & Models        & Methods       & \multicolumn{1}{l}{Clean} & \multicolumn{1}{l}{$l_\infty$} & \multicolumn{1}{l}{$l_2$} & \multicolumn{1}{l}{$l_1$} & \multicolumn{1}{l}{Union} \\\hline
         & WRN-70-16-$l_\infty$(*)~\citep{gowal2020uncovering} & E-AT  & 89.6  & 54.4 & 76.7 & 58.0 & 51.6          \\
         &               & \textbf{RAMP} & 90.6  & 54.7 & 74.6 & 57.9 & \textbf{53.3} \\
         & WRN-34-20-$l_\infty$~\citep{gowal2020uncovering}     & E-AT & 87.8  & 49.0 & 71.6 & 49.8 & 45.1          \\
         &               & \textbf{RAMP} & 87.1  & 49.7 & 70.8 & 50.4 & \textbf{46.9} \\
         & WRN-28-10-$l_\infty$(*)~\citep{carmon2019unlabeled} & E-AT & 89.3  & 51.8 & 74.6 & 53.3 & 47.9          \\
   \textbf{CIFAR-10}      &               & \textbf{RAMP} & 89.2  & 55.9 & 74.7 & 55.7 & \textbf{52.7} \\
         & WRN-28-10-$l_\infty$(*)~\citep{gowal2020uncovering} & E-AT & 89.8  & 54.4 & 76.1 & 56.0 & 50.5          \\
         &               & \textbf{RAMP} & 89.4  & 55.9 & 74.7 & 56.0 & \textbf{52.9} \\
          & RN-50-$l_\infty$~\citep{robustness}        & E-AT          & 85.3  & 46.5 & 68.3 & 45.3 & 41.6 \\
         &               & \textbf{RAMP} & 84.3  & 47.0 & 67.7 & 46.5 & \textbf{43.3} \\\hline
         & XCiT-S-$l_\infty$~\citep{debenedetti2022adversarially}        & E-AT & 68.4  & 38.1 & 51.8 & 23.8 & 23.4          \\
\textbf{ImageNet}&               & \textbf{RAMP} & 66.0  & 35.7 & 50.2 & 30.0   & \textbf{29.1}\\ 
& RN-50-$l_\infty$~\citep{robustness}         & E-AT & 58.2  & 26.9 & 39.5 & 18.8 & 17.8          \\
         &               & \textbf{RAMP}& 55.6  & 25.1 & 38.3 & 22.4 & \textbf{20.9} \\\hline
\end{tabular}
\vspace*{-\baselineskip}
\label{table:other-robust-finetune}
\end{table}

\noindent\textbf{Robust fine-tuning.}\label{sec:robust-finetune} In Table~\ref{table:other-robust-finetune}, we apply \textbf{RAMP} to larger models and datasets (ImageNet). However, the implementation of other baselines is not publicly available and~\citet{croce2022eat} do not report other baseline results except E-AT on larger models and datasets, so we only compare against E-AT in Table~\ref{table:other-robust-finetune}, which shows \textbf{RAMP} consistently obtains better union accuracy and accuracy-robustness tradeoff than E-AT. We observe that \textbf{RAMP} improves the performance more as the model becomes larger. We obtain the SOTA union accuracy of $53.3\%$ on CIFAR-10 and $29.1\%$ on ImageNet.

\noindent\textbf{RAMP with varying $\epsilon_1, \epsilon_2, \epsilon_\infty$ values.} 
We provide results with 1. $(\epsilon_1=12, \epsilon_2=0.5, \epsilon_\infty=\frac{2}{255})$ where $\epsilon_\infty$ size is small and 2. $(\epsilon_1=12, \epsilon_2=1.5, \epsilon_\infty=\frac{8}{255})$ where $\epsilon_2$ size is large, using PreAct ResNet-18 model for CIFAR-10 dataset: these cases have different tradeoff pair compared to  Figure~\ref{fig:l1linf-tradeoff-intro}. The pair identified using our heuristic are $l_1$ - $l_2$ and $l_2$ - $l_{\infty}$. In Table~\ref{table:more-eps-values}, we observe that \textbf{RAMP} consistently outperforms E-AT and MAX with significant margins in union accuracy, when training from scratch and performing robust fine-tuning. In Table~\ref{table:more-eps-values}, when $l_2$ is the bottleneck, E-AT obtains a lower union accuracy as it does not leverage $l_2$ examples. Similar observations are made across various epsilon values, with \textbf{RAMP} consistently outperforming other baselines, as detailed in Appendix~\ref{additional-diff-eps}. Appendix~\ref{supp} includes more training details/results, and ablation studies. Results for applying the trades loss to \textbf{RAMP} outperforming E-AT are detailed in Appendix~\ref{wideresnet-exp}. Appendix~\ref{robust-finetune-rn18-exp} presents robust fine-tuning using ResNet-18, where \textbf{RAMP} achieves the highest union accuracy. 

\begin{wraptable}{r}{0.60\linewidth}
\vspace*{-\baselineskip}
    \caption{\textbf{RN-18 model trained from random initialization} on CIFAR-10 over 5 trials: \textbf{RAMP} achieves the best union robustness and good clean accuracy compared with other baselines. Baseline results are from~\citet{croce2022eat}.}
    \centering
    \fontsize{6}{8}\selectfont
    \begin{tabular}
{@{}lrrrrr@{}}
 \text {Methods} & \text { Clean } & $l_{\infty}$ & $l_2$ & $l_1$ & \text { Union }\\\hline
\text {SAT} & 83.9$\pm$0.8 & 40.7$\pm$0.7 & 68.0$\pm$0.4 & 54.0$\pm$1.2 & 40.4$\pm$0.7 \\
\text {AVG} & 84.6$\pm$0.3 & 40.8$\pm$0.7 & 68.4$\pm$0.7 & 52.1$\pm$0.4 & 40.1$\pm$0.8 \\
\text {MAX} & 80.4$\pm$0.5 & 45.7$\pm$0.9 & 66.0$\pm$0.4 & 48.6$\pm$0.8 & 44.0$\pm$0.7 \\
\text {MSD} & 81.1$\pm$1.1 & 44.9$\pm$0.6 & 65.9$\pm$0.6 & 49.5$\pm$1.2 & 43.9$\pm$0.8 \\
\text {E-AT} & 82.2$\pm$1.8 & 42.7$\pm$0.7 & 67.5$\pm$0.5 & 53.6$\pm$0.1 & 42.4$\pm$0.6 \\
\textbf {RAMP} ($\lambda$=5) & 81.2$\pm$0.3 & 46.0$\pm$0.5 & 65.8$\pm$0.2 & 48.3$\pm$0.6 & \textbf{44.6$\pm$0.6}\\
\textbf {RAMP} ($\lambda$=2) & 82.1$\pm$0.3 & 45.5$\pm$0.3 & 66.6$\pm$0.3 & 48.4$\pm$0.2 & 44.0$\pm$0.2\\\hline
\end{tabular}
\vspace*{-\baselineskip}
\label{table:resnet18-robust-pretrain}
\end{wraptable}
\noindent\textbf{Adversarial training from random initialization.}\label{sec:robust-pretrain} Table~\ref{table:resnet18-robust-pretrain} presents the results of AT from random initialization on CIFAR-10 with PreAct ResNet-18. \textbf{RAMP} has the highest union accuracy with good clean accuracy, which indicates that \textbf{RAMP} can mitigate the tradeoffs among perturbations and robustness/accuracy in this setting. The results for all baselines are from~\citet{croce2022eat}.

\begin{table}[h!]
\centering
\fontsize{6}{8}\selectfont
\caption{Individual, average, and union accuracy against common corruptions (averaged across five levels) and unseen adversaries using WideResNet-28-10 on CIFAR-10 dataset.}
\begin{tabular}{lr|rrrrrrrr}
Models         & Common Corruptions & \multicolumn{1}{l}{$l_0$} & \multicolumn{1}{l}{fog} & \multicolumn{1}{l}{snow} & \multicolumn{1}{l}{gabor} & \multicolumn{1}{l}{elastic} & \multicolumn{1}{l}{jpeginf} & \multicolumn{1}{l}{Avg} & \multicolumn{1}{l}{Union}\\\hline
$l_1$-AT         & 78.2       & 79.0                                      & 41.4                              & 22.9                               & 40.5                               & 48.9                                    & 48.4                                & 46.9                    & 12.8                      \\
$l_2$-AT         & 77.2          & 67.5                                    & 48.7                              & 26.1                               & 44.1                               & 53.2                                    & 45.4                                & 47.5                    & 16.2                      \\
$l_\infty$-AT       & 73.4        & 55.5                                    & 44.7                              & 32.9                               & 53.8                               & 56.6                                    & 33.4                                & 46.2                    & 19.1                      \\
Winninghand~\citep{diffenderfer2021winning}   & \textbf{91.1}  & 74.1                           & 74.5                              & 18.3                               & 76.5                               & 12.6                                    & 0.0                                  & 42.7                    & 0.0                         \\
E-AT          & 71.5      & 58.5                                    & 35.9                              & 35.3                               & 50.7                               & 55.7                                    & 60.3                                & 49.4                    & 21.9                      \\
MAX           & 71.0         & 56.2                                    & 42.9                              & 35.4                               & 49.8                               & 57.8                                    & 55.7                                & 49.6                    & 24.4                      \\
\textbf{RAMP} & 75.5  & 55.5                                    & 40.5                              & 40.2                               & 52.9                               & 60.3                                    & 56.1                                & \textbf{50.9}           & \textbf{26.1}\\\hline                          
\end{tabular}\label{table:comm-corr-unseen-wide}
\vspace*{-\baselineskip}
\end{table}

\noindent\textbf{Universal Robustness.}\label{generality-main} In Table~\ref{table:comm-corr-unseen-wide}, we report average accuracy against common corruptions and union accuracy against unseen adversaries from~\citet{laidlaw2020perceptual} (implementation details are in Appendix~\ref{genality-other}). We compare against $l_p$ pretrained models, E-AT, MAX, winninghand~\citep{diffenderfer2021winning} (a SOTA method for natural corruptions) using WideResNet-28-10 architecture on the CIFAR-10 dataset. Compared to E-AT and MAX, \textbf{RAMP} achieves $4\%$ higher accuracy for common corruptions with five severity levels and $2$-$4\%$ better union accuracy against multiple unseen adversaries. Winninghand has high corruption robustness but no adversarial robustness. The results show that \textbf{RAMP} obtains a better robustness and accuracy tradeoff with stronger universal robustness. In Appendix~\ref{genality-other}, we evaluate on ResNet-18 to support this fact further.





\begin{figure}
  \begin{subfigure}[b!]{0.33\linewidth}
  \vspace*{-\baselineskip}
    \centering
    \includegraphics[width=0.77\linewidth]{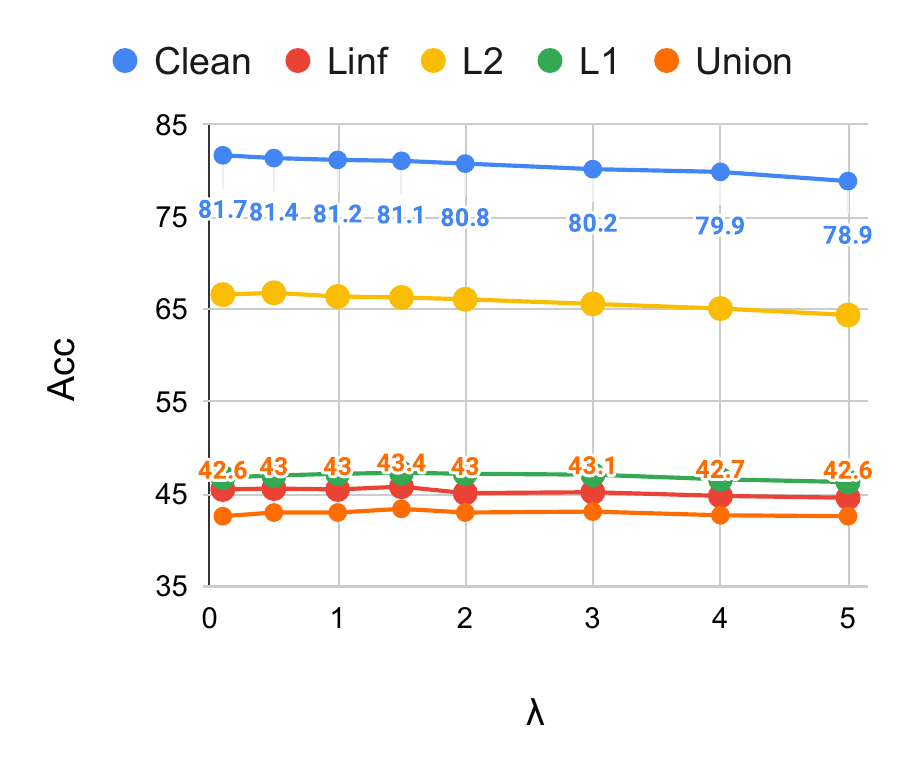}
    \caption{$\lambda$: Robust fine-tuning.}
    \label{fig:ablation-lambda-finetune}
    \vspace*{-\baselineskip}
\end{subfigure}\hfill
  \begin{subfigure}[b!]{0.33\linewidth}
    \centering
    \includegraphics[width=0.77\linewidth]{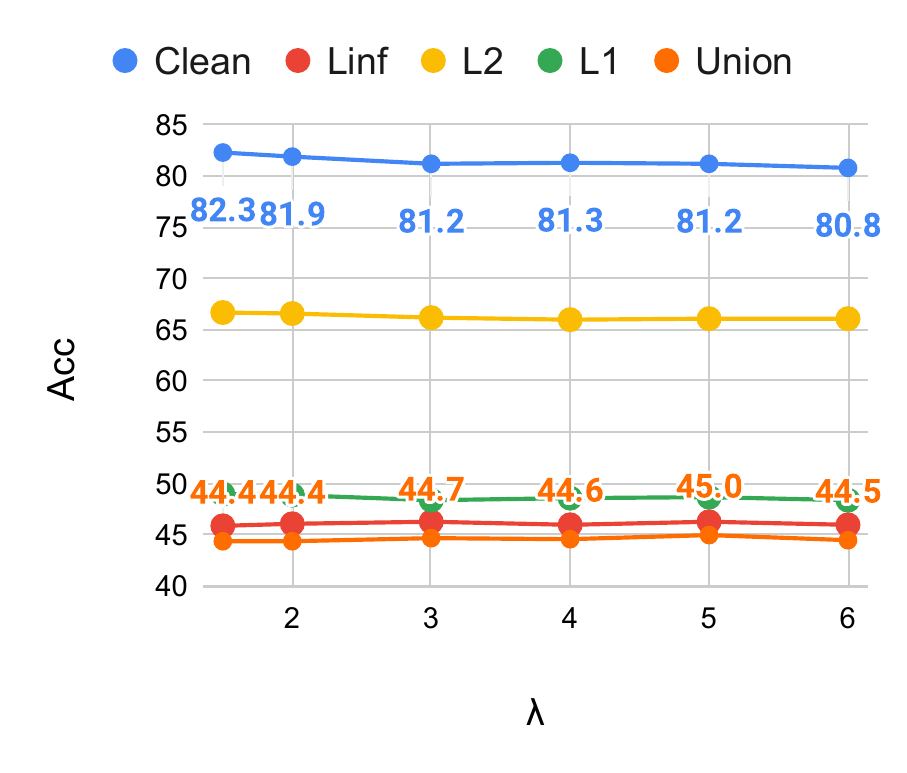}
    \caption{$\lambda$: Train from scratch.}
    \label{fig:ablation-lambda-pretrain}
\end{subfigure}\hfill
\begin{subfigure}[b!]{0.33\linewidth}
    \centering
    \includegraphics[width=0.89\linewidth]{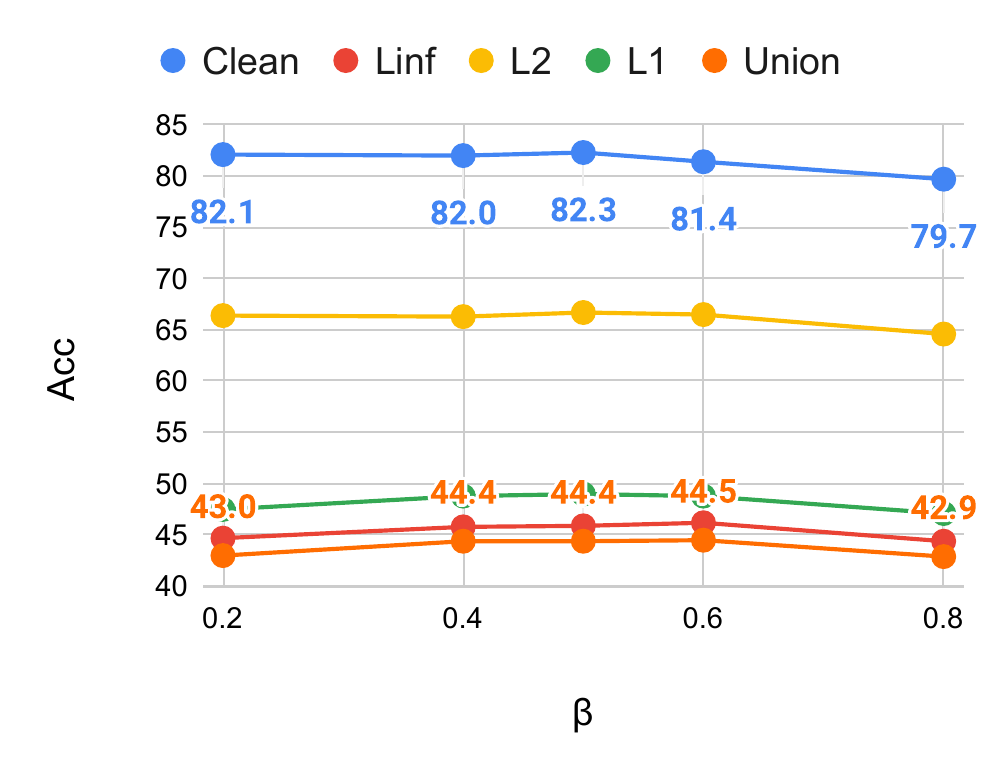}
    \caption{$\beta$: Train from scratch.}
    \label{fig:ablation-beta}
\end{subfigure}
  \caption{Alabtion studies on $\lambda$ and $\beta$ hyper-parameters.} 
  \label{fig:ablation-lambda}
\vspace*{-0.6cm}
\end{figure}

\subsection{Ablation Study and Discussion}\label{sec:ablation}
\noindent\textbf{Sensitivities of $\lambda$.} We perform experiments with different $\lambda$ values in $[0.1, 0.5, 1.0, 1.5, 2, 3, 4, 5]$ for robust fine-tuning and $[1.5, 2, 3, 4, 5, 6]$ for AT from scratch using PreAct-ResNet-18 model for CIFAR-10 dataset. In Figure~\ref{fig:ablation-lambda}, we observe a decreased clean accuracy when $\lambda$ becomes larger. We pick $\lambda=2.0$ for training from scratch (Figure~\ref{fig:ablation-lambda-finetune}) and $\lambda=0.5$ for robust fine-tuning (Figure~\ref{fig:ablation-lambda-pretrain}) in our main experiments, as these values of $\lambda$ yield both good clean and union accuracy.

\noindent\textbf{Choices of $\beta$.} Figure~\ref{fig:ablation-beta} shows the performance of \textbf{RAMP} with varying $\beta$ values on CIFAR-10 ResNet-18 experiments. We pick $\beta=0.5$ for combining natural training and AT via GP, which achieves comparatively good robustness and clean accuracy. This choice is also based on Theorem~\ref{thm:error-GP} when $\beta (2-\beta)$ has the largest difference from $\beta^2$ (0.75 vs 0.25). 


\begin{wraptable}{r}{0.45\linewidth}
\centering
\fontsize{7}{9}\selectfont
\begin{tabular}{lrrrrr}
                   & {Clean} & $l_\infty$ & $l_2$ & $l_1$ & Union \\\hline
RN-18 $l_\infty$-AT & 81.5             & \textbf{45.5}            & 66.4        & \textbf{47.0}          & \textbf{42.9}             \\
RN-18 $l_1$-AT & 81.0                      & 42.6                     & 66.0                   & 48.1                   & 41.5\\
RN-18 $l_2$-AT &         \bf84.1              &     41.6             &    \bf 69.1         &    45.4         & 39.4\\\hline                     
\end{tabular}
\caption{\textbf{RAMP} with $l_\infty, l_1, l_2$-RN-18-AT models on CIFAR-10 with standard epsilons.}
\label{table:reverse-logit-pairing}
\end{wraptable}

\noindent\textbf{Fine-tune $l_p$ AT models with RAMP.} Table~\ref{table:reverse-logit-pairing} shows the robust fine-tuning results using \textbf{RAMP} with $l_\infty$-AT ($q=\infty, r=1$), $l_1$-AT ($q=1, r=\infty$), $l_2$-AT ($q=\infty, r=1$) RN-18 models for CIFAR-10 dataset. For $l_\infty - l_1$ tradeoffs, RAMP on $l_\infty$-AT pre-trained model achieves the best union accuracy. 

\noindent\textbf{Computational analysis and Limitations.} The extra training costs of AT-GP are small, e.g. for each epoch on ResNet-18, the extra NT takes $6$ seconds and the standard AT takes $78$ seconds using a single NVIDIA A100 GPU, and the \textbf{GP} operation only takes $0.04$ seconds on average. RAMP is more expensive than E-AT and less expensive than MAX. We have a complete runtime analysis in Appendix~\ref{time-analysis}. We notice occasional drops in clean accuracy during fine-tuning with \textbf{RAMP}. In some cases, union accuracy improves slightly but clean accuracy and single $l_p$ robustness reduce. Further, we find no negative societal impact from this work.




\section{Conclusion}
We introduce \textbf{RAMP}, a framework enhancing multiple-norm robustness and achieving superior \emph{universal robustness} against corruptions and perturbations by addressing tradeoffs among $l_p$ perturbations and accuracy/robustness. We apply a new logit pairing loss and use gradient projection to obtain SOTA union accuracy with favorable accuracy/robustness tradeoffs against common corruptions and other unseen adversaries. Results demonstrate that \textbf{RAMP} surpasses SOTA methods in union accuracy across model architectures on CIFAR-10 and ImageNet.

\bibliography{main}
\bibliographystyle{plainnat}

\appendix
\clearpage
\section{Proof of Theorems}\label{proof-theory}
\subsection{Proof of Theorem~\ref{thm:convergence}}
We first show what happens during one step of optimization, where we highlight the importance of analyzing delta error.

\begin{theorem}\label{thm-converge-gen}
    Consider model parameter $\theta\sim \pi$ and an aggregation rule \texttt{Aggr}$(\cdot)$ with step size  $\mu>0$. Define the updated parameter as
    \begin{align}
    \theta^+ := \theta - \mu \widehat g_{\texttt{Aggr}}(\theta).
    \end{align}
    Assuming the gradient $\nabla\mathcal{L}(\theta)$ is $\gamma$-Lipschitz in $\theta$ for any input, and let the step size $\mu \leq \tfrac{1}{\gamma},$ we have 
    \begin{align}
        \E_{\widehat\gD_a, \theta}[\mathcal{L}_{\gD_a}(\theta^+)-\mathcal{L}_{\gD_a}(\theta)]\leq -\tfrac{\mu}{2}(\|g_a\|_\pi^2-\Delta^2_{\texttt{Aggr}}).
    \end{align}
    
\end{theorem}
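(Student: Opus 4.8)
The plan is to apply the standard descent lemma for smooth functions and then complete the square, so that the delta error $\Delta^2_{\texttt{Aggr}}$ emerges as the residual term. First I would note that since $\nabla\mathcal{L}(\theta)$ is $\gamma$-Lipschitz in $\theta$ for every fixed input $(x,y)$, the population loss $\mathcal{L}_{\gD_a}(\theta)=\E_{(x,y)\sim\gD_a}\mathcal{L}(f(x),y)$ is itself $\gamma$-smooth (interchanging expectation and gradient under the implicit regularity assumptions). The quadratic upper bound then gives, for any $\theta$,
\[ \mathcal{L}_{\gD_a}(\theta^+)\le \mathcal{L}_{\gD_a}(\theta)+\langle g_a(\theta),\theta^+-\theta\rangle+\tfrac{\gamma}{2}\|\theta^+-\theta\|^2 .\]
Substituting the update $\theta^+-\theta=-\mu\,\widehat g_{\texttt{Aggr}}(\theta)$ and using $\mu\le 1/\gamma$ to bound $\gamma\mu^2/2\le \mu/2$ reduces this to $\mathcal{L}_{\gD_a}(\theta^+)-\mathcal{L}_{\gD_a}(\theta)\le -\mu\langle g_a(\theta),\widehat g_{\texttt{Aggr}}(\theta)\rangle+\tfrac{\mu}{2}\|\widehat g_{\texttt{Aggr}}(\theta)\|^2$.

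Next I would complete the square using the identity $\tfrac12\|v\|^2-\langle u,v\rangle=\tfrac12\|v-u\|^2-\tfrac12\|u\|^2$ with $u=g_a(\theta)$ and $v=\widehat g_{\texttt{Aggr}}(\theta)$, obtaining the pointwise estimate
\[ \mathcal{L}_{\gD_a}(\theta^+)-\mathcal{L}_{\gD_a}(\theta)\le \tfrac{\mu}{2}\|\widehat g_{\texttt{Aggr}}(\theta)-g_a(\theta)\|^2-\tfrac{\mu}{2}\|g_a(\theta)\|^2 .\]
Finally I would take expectations over $\theta\sim\pi$ and over the draw of $\widehat\gD_a$ (on which $\widehat g_{\texttt{Aggr}}$ depends): by the definition of the $L^\pi$-norm, $\E_{\theta\sim\pi}\|g_a(\theta)\|^2=\|g_a\|_\pi^2$, and by Definition~\ref{def:delta-error}, $\E_{\widehat\gD_a,\theta}\|\widehat g_{\texttt{Aggr}}-g_a\|^2=\Delta^2_{\texttt{Aggr}}$, which immediately yields the claimed bound $-\tfrac{\mu}{2}(\|g_a\|_\pi^2-\Delta^2_{\texttt{Aggr}})$.

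I do not expect a real obstacle here — the argument is a textbook smoothness-plus-completing-the-square computation. The only points that need a little care are (i) justifying that a gradient which is $\gamma$-Lipschitz uniformly over inputs transfers to $\gamma$-smoothness of the expectation $\mathcal{L}_{\gD_a}$, and (ii) bookkeeping the two sources of randomness (the sampling of $\widehat\gD_a$ from $\gD_a$ and $\theta\sim\pi$) so that the cross term $\langle g_a,\widehat g_{\texttt{Aggr}}\rangle$ is absorbed cleanly into the delta error rather than being expanded; this is exactly why the statement is phrased in terms of $\Delta^2_{\texttt{Aggr}}$, and it is what makes this theorem the right stepping stone toward the convergence result in Theorem~\ref{thm:convergence}.
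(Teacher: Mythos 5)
Your proof is correct: the descent lemma with $\mu\le 1/\gamma$, followed by completing the square and taking expectations over $\theta\sim\pi$ and $\widehat\gD_a$, yields exactly $-\tfrac{\mu}{2}(\|g_a\|_\pi^2-\Delta^2_{\texttt{Aggr}})$, and your identity $\tfrac12\|v\|^2-\langle u,v\rangle=\tfrac12\|v-u\|^2-\tfrac12\|u\|^2$ checks out. The paper itself does not reproduce an argument here --- it only cites Theorem~A.1 of an external reference --- but your computation is the standard proof that citation points to, so there is nothing substantively different to compare.
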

\begin{proof}
    The proof is the same as Theorem A.1 in~\citep{anonymous2024principled}.
\end{proof}

 \begin{theorem}[Convergence of \texttt{Aggr}$(\cdot)$]\label{thm:convergence}
     For any probability measure $\pi$ over the parameter space, and an aggregation rule \texttt{Aggr}$(\cdot)$ with step size  $\mu>0$. We update the parameter for $T$ steps by 
    $
    \theta^{t+1} := \theta^{t} - \mu \widehat g_{\texttt{Aggr}}(\theta^t). 
    $
    Assume the gradient $\nabla\mathcal{L}(\theta)$ and $\widehat g_{\texttt{Aggr}}(\theta)$ are $\tfrac{\gamma}{2}$-Lipschitz in $\theta$ such that $\theta^t\to \widehat\theta_{\texttt{Aggr}}$.
    $\Delta_{\texttt{Aggr\_max}}$ is the Delta error at time $t^{\prime}$ when $\|\widehat g_{\texttt{Aggr}}(\widehat\theta_{\texttt{Aggr}})-\nabla\mathcal{L}_{\gD_a^{t^{\prime}}}(\widehat\theta_{\texttt{Aggr}})\|^2$ is maximized. Then, given step size $\mu \leq \tfrac{1}{\gamma}$ and a small enough $\epsilon>0$,  with probability at least $1-\delta$ we have
    \begin{align}
        \|\nabla\mathcal{L}_{\gD_a^T}(\theta^T)\|^2 \leq \frac{1}{\delta^2}\left(\sqrt{C_\epsilon \cdot \Delta^2_{\texttt{Aggr\_max}} }+\gO(\epsilon) \right)^2+ \mathcal{O}\left(\frac{1}{T}\right)+\mathcal{O}(\epsilon),
    \end{align}
    where $C_\epsilon=\E_{\widehat \gD_a^{t^\prime}} [ {1}/{\pi(B_\epsilon(\widehat\theta_{\texttt{Aggr}}))}]^2 $ and $B_\epsilon(\widehat\theta_{\texttt{Aggr}})\subset \sR^m$ is the ball with radius $\epsilon$ centered at $\widehat\theta_{\texttt{Aggr}}$. The $C_\epsilon$ measures how well $\pi$ covers where the optimization goes.
 \end{theorem}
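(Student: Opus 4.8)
\textbf{Proof proposal for Theorem~\ref{thm:convergence}.}

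The plan is to combine the single-step descent estimate of Theorem~\ref{thm-converge-gen} with a telescoping/averaging argument over the $T$ iterates, and then convert the resulting bound on the time-averaged gradient norm into a bound on the \emph{final} gradient norm $\|\nabla\mathcal{L}_{\gD_a^T}(\theta^T)\|^2$ using the assumed convergence $\theta^t\to\widehat\theta_{\texttt{Aggr}}$ and the Lipschitz continuity of $\widehat g_{\texttt{Aggr}}$ and $\nabla\mathcal{L}$. First I would sum the inequality of Theorem~\ref{thm-converge-gen} from $t=0$ to $T-1$: since each step decreases $\E\,\mathcal{L}_{\gD_a}$ by at least $\tfrac{\mu}{2}(\|g_a(\theta^t)\|_\pi^2-\Delta^2_{\texttt{Aggr}}(\theta^t))$ and the loss is bounded below, a standard telescoping gives $\tfrac{1}{T}\sum_{t<T}\E\|g_a(\theta^t)\|_\pi^2 \le \tfrac{2}{\mu T}(\mathcal{L}_{\gD_a}(\theta^0)-\mathcal{L}^\star) + \max_t \Delta^2_{\texttt{Aggr}}$, which already produces the $\mathcal{O}(1/T)$ term plus a $\Delta^2_{\texttt{Aggr\_max}}$-type term. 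This is the routine part.

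The substantive step is passing from "the average $L^\pi$-gradient-norm is small" to "the gradient at the specific limiting parameter $\widehat\theta_{\texttt{Aggr}}$, evaluated at the \emph{time-}$T$ distribution $\gD_a^T$, is small with high probability." Here I would use the definition of the $L^\pi$-norm as an expectation over $\theta\sim\pi$ together with a localization argument: because $\|g_a\|_\pi^2=\E_{\theta\sim\pi}\|g_a(\theta)\|^2$ is small and $\theta^t$ converges to $\widehat\theta_{\texttt{Aggr}}$, one restricts attention to the ball $B_\epsilon(\widehat\theta_{\texttt{Aggr}})$; the factor $C_\epsilon=\E[1/\pi(B_\epsilon(\widehat\theta_{\texttt{Aggr}}))]^2$ is exactly the price of conditioning $\pi$ on that ball (a change-of-measure / importance-weighting constant), and the Lipschitz constant $\tfrac{\gamma}{2}$ controls the $\mathcal{O}(\epsilon)$ discrepancy between $g_a$ evaluated on the ball and at the center. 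A Markov/Chebyshev inequality on the nonnegative random quantity then yields the $1/\delta^2$ high-probability factor. Finally I would invoke $\widehat g_{\texttt{Aggr}}(\widehat\theta_{\texttt{Aggr}})\approx \nabla\mathcal{L}_{\gD_a^{t'}}(\widehat\theta_{\texttt{Aggr}})$ up to the delta error at the worst time $t'$, which is where $\Delta_{\texttt{Aggr\_max}}$ enters, and use the triangle inequality to land on $\|\nabla\mathcal{L}_{\gD_a^T}(\theta^T)\|^2$.

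I expect the main obstacle to be handling the \emph{time-varying distribution} $\gD_a^t$ rigorously: unlike a fixed-objective SGD analysis, the adversarial distribution $\gD_a^t$ depends on $\theta^t$, so "$g_a$" is really a moving target and the telescoping of $\mathcal{L}_{\gD_a}$ must be done carefully (e.g. by bounding how much $\mathcal{L}_{\gD_a^{t+1}}$ differs from $\mathcal{L}_{\gD_a^t}$, or by working with a surrogate fixed-point objective near $\widehat\theta_{\texttt{Aggr}}$). The cleanest route is probably to assume, consistent with $\theta^t\to\widehat\theta_{\texttt{Aggr}}$, that the distributions $\gD_a^t$ also stabilize, so that the worst-case delta error $\Delta^2_{\texttt{Aggr\_max}}$ at time $t'$ dominates the tail, and to absorb the transient discrepancies into the $\mathcal{O}(\epsilon)$ and $\mathcal{O}(1/T)$ slack. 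Given that Theorem~\ref{thm-converge-gen} is quoted as identical to a result in~\citet{anonymous2024principled}, I would expect this convergence theorem to follow that reference's template closely, with the localization-constant bookkeeping ($C_\epsilon$, $B_\epsilon$) being the only genuinely delicate piece.
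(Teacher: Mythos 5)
Your proposal follows essentially the same route as the paper's proof: telescoping the one-step bound of Theorem~\ref{thm-converge-gen}, localizing $\pi$ to the ball $B_\epsilon(\widehat\theta_{\texttt{Aggr}})$ to produce $C_\epsilon$ via Cauchy--Schwarz, applying Markov's inequality for the $1/\delta$ factor, and using the $\gamma$-Lipschitz assumptions plus $\theta^t\to\widehat\theta_{\texttt{Aggr}}$ to absorb discrepancies into $\mathcal{O}(\epsilon)$ and pass from the time-averaged gradient norm to $\|\nabla\mathcal{L}_{\gD_a^T}(\theta^T)\|^2$. The one obstacle you flag --- the time-varying $\gD_a^t$ breaking the telescoping --- is exactly the point the paper handles, by inserting the additional assumption $\mathcal{L}_{\gD_a^t}(\theta^t)\le\mathcal{L}_{\gD_a^{t-1}}(\theta^t)$ for all $t$, which is the ``distributions stabilize'' surrogate you anticipated.
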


\begin{proof}
    Denote random function $\widehat f:\sR^m\to \sR_+$ as 
    \begin{align}
        \widehat f(\theta)=\|\widehat g_{\texttt{Aggr}}(\theta)-\nabla\mathcal{L}_{\gD_a}(\theta)\|, \label{eq:thm-gen-99}
    \end{align}
    where the randomness comes from $\widehat\gD_a$. Note that $\widehat f$ is $\gamma$-Lipschitz by assumption. Now we consider $B_\epsilon(\widehat\theta_{\texttt{Aggr}})\subset \sR^m$, i.e., the ball with radius $\epsilon$ centered at $\widehat\theta_{\texttt{Aggr}}$. Then, by $\gamma$-Lipschitzness we have
    \begin{align}
        \E_{\theta \sim \pi} \widehat f(\theta) = \int \widehat f(\theta) \diff \pi(\theta)\\
        \geq \int_{B_\epsilon(\widehat\theta_{\texttt{Aggr}})} (\widehat f(\widehat\theta_{\texttt{Aggr}})-\gamma \epsilon)  \diff \pi(\theta)\\
        =(\widehat f(\widehat\theta_{\texttt{Aggr}})-\gamma \epsilon) \pi(B_\epsilon(\widehat\theta_{\texttt{Aggr}}))
    \end{align}
    Therefore,
    \begin{align}
        \widehat f(\widehat\theta_{\texttt{Aggr}})\leq \frac{1}{\pi(B_\epsilon(\widehat\theta_{\texttt{Aggr}}))}\cdot \E_{\theta\sim \pi} \widehat f(\theta)+\gO(\epsilon).
    \end{align}
    Taking expectation w.r.t. $\widehat \gD_a$ on both sides, we have
    \begin{align}
        \E_{\widehat \gD_a}\widehat f(\widehat\theta_{\texttt{Aggr}})&\leq \E_{\widehat \gD_a} \left[ \frac{1}{\pi(B_\epsilon(\widehat\theta_{\texttt{Aggr}}))}\cdot \E_{\theta\sim \pi} \widehat f(\theta)\right] +\gO(\epsilon)\\
        &\leq \sqrt{\E_{\widehat \gD_a} \left[ \frac{1}{\pi(B_\epsilon(\widehat\theta_{\texttt{Aggr}}))}\right]^2 \cdot \E_{\widehat \gD_a} \left[\E_{\theta\sim \pi} \widehat f(\theta)\right]^2 }+\gO(\epsilon) \tag{Cauchy-Schwarz}\\
        &=\sqrt{C_\epsilon \cdot \E_{\widehat \gD_a} \left[\E_{\theta\sim \pi} \widehat f(\theta)\right]^2 }+\gO(\epsilon) \tag{by definition of $C_\epsilon$}\\
        &\leq \sqrt{C_\epsilon \cdot \E_{\widehat \gD_a}\E_{\theta\sim \pi}  \left[\widehat f(\theta)\right]^2 }+\gO(\epsilon)\tag{Jensen's inequality}\\
        &=\sqrt{C_\epsilon \cdot \Delta^2_{\texttt{Aggr}} }+\gO(\epsilon) 
    \end{align}
    By Markov's inequality, with probability at least $1-\delta$  we have a sampled dataset $\widehat \gD_a$ such that
    \begin{align}
        \widehat f(\widehat\theta_{\texttt{Aggr}}) \leq \frac{1}{\delta}\E_{\widehat \gD_a}\widehat f(\widehat\theta_{\texttt{Aggr}}) \leq \frac{1}{\delta}\sqrt{C_\epsilon \cdot \Delta^2_{\texttt{Aggr}} }+\gO(\epsilon/\delta) \label{eq:thm-gen-100}
    \end{align}
    Conditioned on such event, we proceed on to the optimization part.

    Note that Theorem~\ref{thm-converge-gen} characterizes how the optimization works for one gradient update. We denote $\gD_a^t$ as the data distribution $\gD_a$ at time step $t$. Therefore, for any time step $t=0,\dots,T-1$, we can apply Theorem~\ref{thm-converge-gen} which only requires the Lipschitz assumption:
    \begin{align}
        \mathcal{L}_{\gD_a^t}(&\theta^{t+1})-\mathcal{L}_{\gD_a^t}(\theta^t)\leq -\frac{\mu}{2} \left(\| \nabla\mathcal{L}_{\gD_a^t}(\theta^t)\|^2- \| \widehat g_{\texttt{Aggr}}(\theta^t)-\nabla\mathcal{L}_{\gD_a^t}(\theta^t)\|^2\right).
    \end{align}
    We notice that $\gD_a^t$ changes based on $\theta$s of different time steps. On both sides, to sum over $t=0,\dots,T-1$, we first consider two terms: 

    \begin{align}
        (\mathcal{L}_{\gD_a^t}(&\theta^{t+1})-\mathcal{L}_{\gD_a^t}(\theta^t)) + (\mathcal{L}_{\gD_a^{t-1}}(\theta^{t})-\mathcal{L}_{\gD_a^{t-1}}(\theta^{t-1}))
    \end{align}

    To compare $\mathcal{L}_{\gD_a^t}(\theta^t)$ and $\mathcal{L}_{\gD_a^{t-1}}(\theta^{t})$, since $\gD_a^t$ optimizes one more step than $\gD_a^{t-1}$, we assume $\mathcal{L}_{\gD_a^t}(\theta^t) \leq \mathcal{L}_{\gD_a^{t-1}}(\theta^{t})$ for $\forall t$. Therefore, we have:

    \begin{align}
        (\mathcal{L}_{\gD_a^t}(&\theta^{t+1})-\mathcal{L}_{\gD_a^t}(\theta^t)) + (\mathcal{L}_{\gD_a^{t-1}}(\theta^{t})-\mathcal{L}_{\gD_a^{t-1}}(\theta^{t-1})) \geq (\mathcal{L}_{\gD_a^t}(\theta^{t+1})-\mathcal{L}_{\gD_a^{t-1}}(\theta^t)) + (\mathcal{L}_{\gD_a^{t-1}}(\theta^{t})-\mathcal{L}_{\gD_a^{t-1}}(\theta^{t-1}))
    \end{align}

    Summing up all time steps,

    \begin{align}
        \mathcal{L}_{\gD_a^t}(&\theta^{t+1})-\mathcal{L}_{\gD_a^0}(\theta^0)\leq 
        \mathcal{L}_{\gD_a^{t}}(\theta^{t+1})- \mathcal{L}_{\gD_a^{t-1}}(\theta^{t}) + \mathcal{L}_{\gD_a^{t-1}}(\theta^{t}) -\mathcal{L}_{\gD_a^{t-2}}(\theta^{t-1}) + ... - \mathcal{L}_{\gD_a^0}(\theta^0) \\&\leq -\frac{\mu}{2} \left(\sum_{t=0}^{T-1}\| \nabla\mathcal{L}_{\gD_a^{t-1}}(\theta^t)\|^2- \sum_{t=0}^{T-1}\| \widehat g_{\texttt{Aggr}}(\theta^t)-\nabla\mathcal{L}_{\gD_a^t}(\theta^t)\|^2\right).
    \end{align}
    Dividing both sides by $T$, and with regular algebraic manipulation we derive
    \begin{align}
        \frac{1}{T}\sum_{t=0}^{T-1}\| \nabla\mathcal{L}_{\gD_a^{t-1}}(\theta^t)\|^2\leq \frac{2}{\mu T}(\mathcal{L}_{\gD_a^0}(&\theta^{0})-\mathcal{L}_{\gD_a^{T-1}}(\theta^T))+\frac{1}{T}\sum_{t=0}^{T-1}\| \widehat g_{\texttt{Aggr}}(\theta^t)-\nabla\mathcal{L}_{\gD_a^t}(\theta^t)\|^2.
    \end{align}
    Note that we assume the loss function $\mathcal{L}_\gD(\theta):=\E_{(x,y)\sim \gD}\mathcal{L}(f(x), y)$, is non-negative. Thus, we have
\begin{align}
        \frac{1}{T}\sum_{t=0}^{T-1}\| \nabla\mathcal{L}_{\gD_a^{t-1}}(\theta^t)\|^2&\leq \frac{2\mathcal{L}_{\gD_a^0}(\theta^{0})}{\mu T}+\frac{1}{T}\sum_{t=0}^{T-1}\| \widehat g_{\texttt{Aggr}}(\theta^t)-\nabla\mathcal{L}_{\gD_a^t}(\theta^t)\|^2.\label{eq:thm-gen-1}
    \end{align}

    Note that we assume given $\widehat\gD_a$ we have $\theta^t\to \widehat\theta_{\texttt{Aggr}}$. Therefore, for any $\epsilon>0$ there exist $T_\epsilon$ such that
    \begin{align}
        \forall t>T_\epsilon: \|\theta^t-\widehat\theta_{\texttt{Aggr}}\|<\epsilon. \label{eq:thm-gen-13}
    \end{align}
    This implies that $\forall t>T_\epsilon$:
    \begin{align}
        \mu\| \widehat g_{\texttt{Aggr}}(\theta^t)\|=\|\theta^{t+1}-\widehat\theta_{\texttt{Aggr}}+\widehat\theta_{\texttt{Aggr}}-\theta^t\|\leq \|\theta^{t+1}-\widehat\theta_{\texttt{Aggr}}\|+\|\widehat\theta_{\texttt{Aggr}}-\theta^t\|< 2\epsilon. \label{eq:thm-gen-12}
    \end{align}
    Moreover, \eqref{eq:thm-gen-13} also implies $\forall t_1, t_2>T_\epsilon$:
    \begin{align}
        \| \nabla\mathcal{L}_{\gD_a^{t_1}}(\theta^{t_1})-\nabla\mathcal{L}_{\gD_a^{t_2}}(\theta^{t_2})\|&\leq \gamma \|\theta^{t_1}-\theta^{t_2}\|\tag{$\gamma$-Lipschitzness}\\
        &<2\epsilon. \label{eq:thm-gen-14}
    \end{align}

    Now, let's get back to \eqref{eq:thm-gen-1}. For $\forall T>T_\epsilon$ we have
    \begin{align}
        \frac{1}{T}\sum_{t=0}^{T-1}\| \nabla\mathcal{L}_{\gD_a^{t-1}}(\theta^t)\|^2&\leq \frac{2\mathcal{L}_{\gD_a^0}(\theta^{0})}{\mu T}+\frac{1}{T}\sum_{t=0}^{T_\epsilon-1}\| \widehat g_{\texttt{Aggr}}(\theta^t)-\nabla\mathcal{L}_{\gD_a^t}(\theta^t)\|^2+\frac{1}{T} \sum_{t=T_\epsilon}^{T-1}\| \widehat g_{\texttt{Aggr}}(\theta^t)-\nabla\mathcal{L}_{\gD_a^t}(\theta^t)\|^2\\
        &=\mathcal{O}\left(\frac{1}{T}\right)+\frac{1}{T} \sum_{t=T_\epsilon}^{T-1}\| \widehat g_{\texttt{Aggr}}(\theta^t)-\nabla\mathcal{L}_{\gD_a^t}(\theta^t)\|^2\\
        &=\mathcal{O}\left(\frac{1}{T}\right)+\frac{1}{T} \sum_{t=T_\epsilon}^{T-1}\| \widehat g_{\texttt{Aggr}}(\theta^t)- \widehat g_{\texttt{Aggr}}(\widehat\theta_{\texttt{Aggr}})+ \widehat g_{\texttt{Aggr}}(\widehat\theta_{\texttt{Aggr}})-\nabla\mathcal{L}_{\gD_a^t}(\theta^t)\|^2\\
        &\leq \mathcal{O}\left(\frac{1}{T}\right)+\frac{1}{T} \sum_{t=T_\epsilon}^{T-1} \left( \| \widehat g_{\texttt{Aggr}}(\theta^t)- \widehat g_{\texttt{Aggr}}(\widehat\theta_{\texttt{Aggr}})\|+ \|\widehat g_{\texttt{Aggr}}(\widehat\theta_{\texttt{Aggr}})-\nabla\mathcal{L}_{\gD_a^t}(\theta^t)\|\right)^2 \tag{triangle inequality}\\
        &=\mathcal{O}\left(\frac{1}{T}\right)+\frac{1}{T} \sum_{t=T_\epsilon}^{T-1} \left( \mathcal{O}(\epsilon)+\|\widehat g_{\texttt{Aggr}}(\widehat\theta_{\texttt{Aggr}})-\nabla\mathcal{L}_{\gD_a^t}(\theta^t)\|\right)^2 \tag{by \eqref{eq:thm-gen-12}}\\
        &=\mathcal{O}\left(\frac{1}{T}\right)+\mathcal{O}(\epsilon)+\frac{1}{T} \sum_{t=T_\epsilon}^{T-1} \left(\|\widehat g_{\texttt{Aggr}}(\widehat\theta_{\texttt{Aggr}})-\nabla\mathcal{L}_{\gD_a^t}(\widehat\theta_{\texttt{Aggr}})+\nabla\mathcal{L}_{\gD_a^t}(\widehat\theta_{\texttt{Aggr}})-\nabla\mathcal{L}_{\gD_a^t}(\theta^t)\|\right)^2 \\
        &\leq \mathcal{O}\left(\frac{1}{T}\right)+\mathcal{O}(\epsilon)+\frac{1}{T} \sum_{t=T_\epsilon}^{T-1} \left(\|\widehat g_{\texttt{Aggr}}(\widehat\theta_{\texttt{Aggr}})-\nabla\mathcal{L}_{\gD_a^t}(\widehat\theta_{\texttt{Aggr}})\|+\gO(\epsilon)\right)^2 \tag{by \eqref{eq:thm-gen-14}}\\
        &\leq \mathcal{O}\left(\frac{1}{T}\right)+\mathcal{O}(\epsilon)+\|\widehat g_{\texttt{Aggr}}(\widehat\theta_{\texttt{Aggr}})-\nabla\mathcal{L}_{\gD_a^{t^{\prime}}}(\widehat\theta_{\texttt{Aggr}})\|^2 \label{eq:thm-gen-101}
    \end{align}
    Equation~\ref{eq:thm-gen-101} bounds the left hand side with the maximum $\|\widehat g_{\texttt{Aggr}}(\widehat\theta_{\texttt{Aggr}})-\nabla\mathcal{L}_{\gD_a^t}(\widehat\theta_{\texttt{Aggr}})\|^2$ one can get during the optimization steps. Here, we assume at time $t^{\prime}$, the largest value is attained. We denote $\Delta^2_{\texttt{Aggr\_max}}$ as the delta error at time step $t'$.
    
    Then, we can continue with what we have done at the beginning of the proof of this theorem:
    \begin{align}
    \eqref{eq:thm-gen-101}&=\mathcal{O}\left(\frac{1}{T}\right)+\mathcal{O}(\epsilon)+f(\widehat\theta_{\texttt{Aggr}})^2 \tag{by \eqref{eq:thm-gen-99}}\\
    &\leq \mathcal{O}\left(\frac{1}{T}\right)+\mathcal{O}(\epsilon)+\left(\frac{1}{\delta}\sqrt{C_\epsilon \cdot \Delta^2_{\texttt{Aggr\_max}} }+\gO(\epsilon/\delta) \right)^2  \tag{by \eqref{eq:thm-gen-100}}
    \end{align}
    Therefore, combining the above we finally have: for $\forall T>T_\epsilon$ with probability at least $1-\delta$,
    \begin{align}
        \frac{1}{T}\sum_{t=0}^{T-1}\| \nabla\mathcal{L}_{\gD_a^{t-1}}(\theta^t)\|^2 \leq \mathcal{O}\left(\frac{1}{T}\right)+\mathcal{O}(\epsilon)+\frac{1}{\delta^2}\left(\sqrt{C_\epsilon \cdot \Delta^2_{\texttt{Aggr\_max}} }+\gO(\epsilon) \right)^2 \label{eq:thm-gen-103}
    \end{align}

    To complete the proof, let us investigate the left-hand side. 
    \begin{align}
        \frac{1}{T}\sum_{t=0}^{T-1}\| \nabla\mathcal{L}_{\gD_a^{t-1}}(\theta^t)\|^2&=\frac{1}{T} \sum_{t=0}^{T_\epsilon -1}\| \nabla\mathcal{L}_{\gD_a^t}(\theta^t)\|^2+\frac{1}{T} \sum_{t=T_\epsilon}^{T-1}\| \nabla\mathcal{L}_{\gD_a^t}(\theta^t)\|^2\\
        &=\gO\left(\frac{1}{T}\right)+\frac{1}{T} \sum_{t=T_\epsilon}^{T-1}\| \nabla\mathcal{L}_{\gD_a^t}(\theta^t)\|^2\\
        &\geq \gO\left(\frac{1}{T}\right)+\frac{1}{T} \sum_{t=T_\epsilon}^{T-1}\left(\| \nabla\mathcal{L}_{\gD_a^t}(\theta^t)-\nabla\mathcal{L}_{\gD_a^T}(\theta^T)\| - \|\nabla\mathcal{L}_{\gD_a^T}(\theta^T)\|\right)^2\tag{triangle inequality}\\
        &=\gO\left(\frac{1}{T}\right)+\frac{1}{T} \sum_{t=T_\epsilon}^{T-1}\left(\gO(\epsilon)+\|\nabla\mathcal{L}_{\gD_a^T}(\theta^T)\|^2\right) \tag{by \eqref{eq:thm-gen-14}}\\
        &=\gO\left(\frac{1}{T}\right)+\gO(\epsilon)+\|\nabla\mathcal{L}_{\gD_a^T}(\theta^T)\|^2. \label{eq:thm-gen-104}
    \end{align}
    Combining \eqref{eq:thm-gen-103} and \eqref{eq:thm-gen-104}, we finally have
    \begin{align}
        \|\nabla\mathcal{L}_{\gD_a^T}(\theta^T)\|^2 \leq \mathcal{O}\left(\frac{1}{T}\right)+\mathcal{O}(\epsilon)+\frac{1}{\delta^2}\left(\sqrt{C_\epsilon \cdot \Delta^2_{\texttt{Aggr\_max}} }+\gO(\epsilon) \right)^2,
    \end{align}
    which completes the proof.
\end{proof}

{\subsection{Proof of Theorem~\ref{thm:error-GP}}

To prove Theorem~\ref{thm:error-GP}, we first use the following definitions and lemmas from~\citep{anonymous2024principled}, to get the delta errors of Gradient Projection (GP) and standard adversarial training (AT):

\begin{definition}[GP Aggregation] \label{def:gp_aggr}
Let $\beta \in [0,1]$ be the weight that balances between $\widehat g_a$ and $\widehat g_n$. 
The GP aggregation operation is
\begin{align} 
    GP (\widehat g_a, \widehat g_n) = \left( (1-\beta)\widehat g_a + \beta \texttt{Proj}_+(\widehat g_a|\widehat g_n) \right).
\end{align}
where $\texttt{Proj}_+(\widehat g_a|\widehat g_n) =\max\{\langle \widehat g_a, \widehat g_n \rangle, 0 \}\widehat g_n/\|\widehat g_n\|^2$ is the operation that projects $\widehat g_a$ to the positive direction of $\widehat g_n$.
\end{definition}

\begin{definition}[AT Aggregation] \label{def:at_aggr} 
The AT aggregation operation is
\begin{align} 
    AT (\widehat g_a) = \widehat g_a.
\end{align}
standard AT only leverages the gradient update on $\widehat \gD_a$.
\end{definition}

\begin{lemma}[Delta Error of GP]\label{lemma:GP}
Given distributions $\widehat\gD_a$, $\gD_a$ and $\widehat\gD_{n}$, as well as the model updates $\widehat{g_a}, g_a, \widehat g_n$ on these distributions per epoch, we have $\Delta^2_{{GP}}$ as follows
\begin{align} 
    \Delta^2_{{GP}}\approx \left((1-\beta)^2+\frac{2\beta-\beta^2}{m}\right) \E_{\widehat\gD_a} \|g_{a} -  \widehat g_{a}\|^2_\pi   + \beta^2 \bar\tau^2 \|g_{a} -  
    \widehat g_{n}\|^2_\pi,
\end{align}

In the above equation, $m$ is the model dimension and $\bar \tau^2= \E_\pi[\tau^2] \in [0, 1]$ where $\tau (\theta)$ is the $\sin(\cdot)$ value of the angle between $\widehat g_{n}$ and $g_{a}-\widehat g_{n}$. $\|\cdot\|_{\pi}$ is the $\pi$-norm over the model parameter space.

\end{lemma}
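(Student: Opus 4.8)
\textbf{Proof proposal for Lemma~\ref{lemma:GP} (Delta Error of GP).}

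The plan is to expand the squared $L^\pi$-norm $\Delta^2_{GP}=\E_{\widehat\gD_a}\|g_a-\widehat g_{GP}\|^2_\pi$ directly from the definition of the GP aggregation rule, and then apply a high-dimensional approximation to the random projection term. First I would substitute $\widehat g_{GP}=(1-\beta)\widehat g_a+\beta\,\texttt{Proj}_+(\widehat g_a|\widehat g_n)$ into the error and write $g_a-\widehat g_{GP}=(1-\beta)(g_a-\widehat g_a)+\beta(g_a-\texttt{Proj}_+(\widehat g_a|\widehat g_n))$. Expanding the square gives three groups of terms: a $(1-\beta)^2\E\|g_a-\widehat g_a\|^2_\pi$ piece, a $\beta^2\E\|g_a-\texttt{Proj}_+(\widehat g_a|\widehat g_n)\|^2_\pi$ piece, and a cross term $2\beta(1-\beta)\E\langle g_a-\widehat g_a,\,g_a-\texttt{Proj}_+(\widehat g_a|\widehat g_n)\rangle_\pi$. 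The strategy is to show that the projection $\texttt{Proj}_+(\widehat g_a|\widehat g_n)$, which keeps only the component of $\widehat g_a$ along the (roughly isotropic, high-dimensional) direction $\widehat g_n$, is in expectation close to $\frac1m\widehat g_a$ in the relevant norms, plus a residual controlled by the angle between $\widehat g_n$ and $g_a-\widehat g_n$ (whose sine is $\tau$).

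The key steps, in order: (i) decompose $g_a-\texttt{Proj}_+(\widehat g_a|\widehat g_n)$ as $(g_a-\widehat g_n)+(\widehat g_n-\texttt{Proj}_+(\widehat g_a|\widehat g_n))$, or alternatively write $g_a=\widehat g_n+(g_a-\widehat g_n)$ and split the projection accordingly; (ii) use the concentration-of-measure / random-direction heuristic — when $m\to\infty$ and $\widehat g_n$ behaves like a generic direction relative to the fixed vector being projected, $\|\texttt{Proj}_+(v|\widehat g_n)\|^2\approx \|v\|^2/m$ and the projection of $v$ onto $\widehat g_n$ is nearly orthogonal to any fixed reference vector — to reduce the projection term to $\approx\frac1m\|g_a-\widehat g_a\|^2_\pi$ contribution plus the $\tau^2\|g_a-\widehat g_n\|^2_\pi$ residual coming from the component of $g_a-\widehat g_n$ that is \emph{not} captured; (iii) argue the cross term $\E\langle g_a-\widehat g_a,\,g_a-\texttt{Proj}_+\rangle_\pi$ is $O(1/m)$ and hence negligible, again because the projection direction is asymptotically uncorrelated with the fixed vector $g_a-\widehat g_a$; (iv) collect the surviving terms to obtain $\big((1-\beta)^2+\frac{2\beta-\beta^2}{m}\big)\E_{\widehat\gD_a}\|g_a-\widehat g_a\|^2_\pi+\beta^2\bar\tau^2\|g_a-\widehat g_n\|^2_\pi$, recognizing $\beta^2+2\beta(1-\beta)=2\beta-\beta^2$ as the coefficient that multiplies the $1/m$ term after combining the projection square and cross contributions. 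Since this lemma is quoted from~\citep{anonymous2024principled}, I would lean on their formalization of the ``model dimension $m\to\infty$'' regime (presumably a random-matrix or Gaussian-direction assumption on how $\widehat g_n$ sits relative to $g_a$) to make the heuristic rigorous, rather than re-deriving it.

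The main obstacle is step (ii): making precise the sense in which $\texttt{Proj}_+(\widehat g_a|\widehat g_n)\approx\frac1m\widehat g_a$ and isolating the $\bar\tau^2$ residual. This requires a genuine high-dimensional probabilistic model for the relationship between $\widehat g_n$ and $g_a$ (otherwise the projection onto $\widehat g_n$ of a vector highly aligned with $\widehat g_n$ need not be small), plus care with the positive-part clipping $\max\{\langle\widehat g_a,\widehat g_n\rangle,0\}$, which breaks linearity and means one must track the event $\langle\widehat g_a,\widehat g_n\rangle<0$ separately — in high dimensions this sign is roughly a coin flip, contributing a factor that the stated approximation absorbs into the $1/m$ term. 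I would handle the clipping by conditioning on the sign, noting that on the negative event $\texttt{Proj}_+=0$ so the error there is just $\|g_a\|^2$, and that this event's measure and the associated contribution are lower order once the main $\|g_a-\widehat g_n\|^2$ and $\|g_a-\widehat g_a\|^2$ scales are identified; the decomposition $g_a=\widehat g_n+(g_a-\widehat g_n)$ with $\tau$ the sine of the angle at $\widehat g_n$ is exactly what keeps the bookkeeping clean. Everything after step (ii) is routine algebra (collecting $\beta$ coefficients and invoking $\bar\tau^2=\E_\pi[\tau^2]$).
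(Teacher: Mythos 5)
The paper does not actually prove this lemma: its entire ``proof'' is the sentence ``The proof is the same as Theorem 4.4 in [anonymous2024principled],'' so there is no in-paper argument to compare yours against line by line. That said, your reconstruction is the natural bias--variance derivation and it does land on the right coefficients. The decomposition $g_a-\widehat g_{GP}=(1-\beta)(g_a-\widehat g_a)+\beta\bigl(g_a-\texttt{Proj}_+(\widehat g_a|\widehat g_n)\bigr)$, the identification of the bias term $\beta^2\tau^2\|g_a-\widehat g_n\|^2_\pi$ as the squared distance from $g_a$ to the line spanned by $\widehat g_n$ (which is exactly what $\tau$ being the sine of the angle between $\widehat g_n$ and $g_a-\widehat g_n$ encodes), and the observation that $\beta^2+2\beta(1-\beta)=2\beta-\beta^2$ collects the $1/m$ contributions from the squared projection term and the cross term, all match the stated formula. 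Your deferral to the cited reference for the formal ``generic direction'' assumption is also the honest move, since that assumption is the only thing that makes $\E\|\texttt{Proj}(\widehat g_a-g_a|\widehat g_n)\|^2\approx\tfrac{1}{m}\E\|\widehat g_a-g_a\|^2$ true.

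Two wrinkles worth fixing. First, your step (iii) says the cross term is ``$O(1/m)$ and hence negligible,'' but it is not negligible at the order you are keeping: dropping it would give $(1-\beta)^2+\beta^2/m$ rather than $(1-\beta)^2+(2\beta-\beta^2)/m$. The cross term $2\beta(1-\beta)\E\langle g_a-\widehat g_a,\,\texttt{Proj}(g_a)-\texttt{Proj}(\widehat g_a)\rangle_\pi$ equals $2\beta(1-\beta)\E\|\texttt{Proj}(\widehat g_a-g_a|\widehat g_n)\|^2_\pi\approx\tfrac{2\beta(1-\beta)}{m}\E\|g_a-\widehat g_a\|^2_\pi$ by self-adjointness and idempotence of the projection (plus unbiasedness of $\widehat g_a$ to kill the other cross piece); your step (iv) silently uses exactly this, so the two steps contradict each other even though the final answer is right. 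Second, the claim that $\texttt{Proj}_+(\widehat g_a|\widehat g_n)\approx\tfrac1m\widehat g_a$ is not correct as written: $\widehat g_a$ is strongly aligned with $\widehat g_n$, so its projection onto $\widehat g_n$ is order one, not order $1/m$. The $1/m$ scaling applies only to the noise $\widehat g_a-g_a$, which is the vector assumed isotropic relative to $\widehat g_n$; you acknowledge this caveat later, but the derivation should be phrased around the noise from the start.
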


\begin{proof}
    The proof is the same as Theorem 4.4 in~\citet{anonymous2024principled}.
\end{proof}

\begin{lemma}[Delta Error of AT]\label{lemma:AT}
Given distributions $\widehat\gD_a$, $\gD_a$ and $\widehat\gD_{n}$, as well as the model updates $\widehat{g_a}, g_a, \widehat g_n$ on these distributions per epoch, we have $\Delta^2_{AT}$ as follows
\begin{align} 
    \Delta^2_{AT} = \E_{\widehat\gD_a} \|g_{a} -  \widehat g_{a}\|^2_\pi,
\end{align}

where $\|\cdot\|_{\pi}$ is the $\pi$-norm over the model parameter space.
\end{lemma}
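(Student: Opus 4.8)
The plan is to obtain the claimed identity by directly specializing the general delta-error definition to the (trivial) AT aggregation rule; unlike the GP case in Lemma~\ref{lemma:GP}, no projection, angle decomposition, or large-$m$ approximation is involved, so the equality will be exact. First I would recall from Definition~\ref{def:at_aggr} that standard adversarial training performs no mixing with the natural update: its aggregation operator simply returns the empirical adversarial gradient, so the aggregated update is $\widehat g_{\texttt{Aggr}} = AT(\widehat g_a) = \widehat g_a$.

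Next I would invoke the definition of the delta error (Definition~\ref{def:delta-error}), namely $\Delta^2_{\texttt{Aggr}} := \E_{\widehat\gD_a^t}\|g_a - \widehat g_{\texttt{Aggr}}\|^2_\pi$, which measures the squared $L^\pi$-distance between the aggregated update and the population adversarial gradient $g_a$. Substituting the AT-specific aggregate $\widehat g_{\texttt{Aggr}} = \widehat g_a$ identified above yields immediately
\begin{align}
    \Delta^2_{AT} = \E_{\widehat\gD_a}\|g_a - \widehat g_a\|^2_\pi,
\end{align}
which is exactly the stated expression; here $\|\cdot\|_\pi$ is the $L^\pi$-norm over the parameter space from the corresponding definition, and the outer expectation is over the sampling of $\widehat\gD_a$ at the current epoch $t$. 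Note that although the lemma's preamble lists all three of $\widehat g_a, g_a, \widehat g_n$ (to mirror the hypotheses of Lemma~\ref{lemma:GP}), the natural update $\widehat g_n$ plays no role here, since AT discards it entirely.

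Because the derivation is a one-line definitional substitution, there is no genuine obstacle. The only points worth checking are bookkeeping ones: that the distribution under the outer expectation is the empirical adversarial distribution $\widehat\gD_a^t$ at the present iterate (matching the normalization used for $\Delta^2_{GP}$ in Lemma~\ref{lemma:GP}), and that $g_a$ denotes the population-level gradient $\nabla\mathcal{L}_{\gD_a}(\theta)$ rather than its empirical counterpart $\widehat g_a = \nabla\mathcal{L}_{\widehat\gD_a}(\theta)$. Ensuring these conventions agree is precisely what makes $\Delta^2_{AT}$ and $\Delta^2_{GP}$ directly comparable when the two lemmas are subtracted to produce Equation~\eqref{eq:thm-fedgp} in Theorem~\ref{thm:error-GP}.
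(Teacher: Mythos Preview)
Your proposal is correct; the paper does not actually provide an explicit proof of this lemma, treating it as immediate from the definitions, and your one-line substitution of $\widehat g_{\texttt{Aggr}} = \widehat g_a$ from Definition~\ref{def:at_aggr} into Definition~\ref{def:delta-error} is precisely the intended reasoning. Your added remarks about the bookkeeping conventions (that $g_a$ is the population gradient and that the outer expectation matches the one used in Lemma~\ref{lemma:GP}) are also accurate and useful for the subsequent comparison in Theorem~\ref{thm:error-GP}.
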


Then, we prove Theorem~\ref{thm:error-GP}.
\label{thm:error-diff}\begin{theorem}[Error Analysis of GP]
When the model dimension is large ($m \rightarrow \infty$) at time step $t$, we have
\begin{align} 
    \Delta^2_{AT} - \Delta^2_{GP} \approx \beta (2 - \beta) \E_{\widehat\gD_a^t} \|g_{a} - \widehat{g_{a}} \|^2_\pi - \beta^2 \bar\tau^2 \|g_{a} -  \widehat g_{n}\|^2_\pi. \label{eq:thm-fedgp}
\end{align}
$\bar \tau^2= \E_\pi[\tau^2] \in [0, 1]$ where $\tau$ is the $\sin(\cdot)$ value of the angle between $\widehat g_{n}$ and $g_{a}-\widehat g_{n}$, $\|\cdot\|_{\pi}$ is the $\pi$-norm over the model parameter space. 
\end{theorem}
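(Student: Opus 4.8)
The plan is to obtain the claimed identity by directly combining the two closed-form delta-error expressions that the excerpt imports, namely Lemma~\ref{lemma:AT} for $\Delta^2_{AT}$ and Lemma~\ref{lemma:GP} for $\Delta^2_{GP}$, then simplifying the $\beta$-dependent coefficient and passing to the large-dimension limit $m\to\infty$. The whole argument is a short algebraic reduction once those two lemmas are in hand.

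First I would write out both delta errors at the same epoch $t$. Lemma~\ref{lemma:AT} gives $\Delta^2_{AT}=\E_{\widehat\gD_a^t}\|g_a-\widehat g_a\|^2_\pi$ exactly, and Lemma~\ref{lemma:GP} gives the high-dimensional approximation
\begin{align}
\Delta^2_{GP}\approx\left((1-\beta)^2+\frac{2\beta-\beta^2}{m}\right)\E_{\widehat\gD_a^t}\|g_a-\widehat g_a\|^2_\pi+\beta^2\bar\tau^2\|g_a-\widehat g_n\|^2_\pi,
\end{align}
where $m$ is the model dimension and $\bar\tau^2=\E_\pi[\tau^2]\in[0,1]$ is as in the statement. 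Since both errors are taken at the same time step, the quantities $\E_{\widehat\gD_a^t}\|g_a-\widehat g_a\|^2_\pi$ and $\|g_a-\widehat g_n\|^2_\pi$ appear in both expressions and can be treated as common factors.

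Next I would subtract and collect the coefficient of $\E_{\widehat\gD_a^t}\|g_a-\widehat g_a\|^2_\pi$:
\begin{align}
\Delta^2_{AT}-\Delta^2_{GP}\approx\left(1-(1-\beta)^2-\frac{2\beta-\beta^2}{m}\right)\E_{\widehat\gD_a^t}\|g_a-\widehat g_a\|^2_\pi-\beta^2\bar\tau^2\|g_a-\widehat g_n\|^2_\pi.
\end{align}
Using $1-(1-\beta)^2=2\beta-\beta^2=\beta(2-\beta)$, the bracketed factor equals $\beta(2-\beta)-\tfrac{2\beta-\beta^2}{m}=\beta(2-\beta)\bigl(1-\tfrac1m\bigr)$, so letting $m\to\infty$ discards the $\tfrac1m$ correction; this is the only use of the large-dimension hypothesis in the present step, and it is consistent with the regime in which Lemma~\ref{lemma:GP} is itself stated. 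This yields
\begin{align}
\Delta^2_{AT}-\Delta^2_{GP}\approx\beta(2-\beta)\,\E_{\widehat\gD_a^t}\|g_a-\widehat g_a\|^2_\pi-\beta^2\bar\tau^2\|g_a-\widehat g_n\|^2_\pi,
\end{align}
which is exactly \eqref{eq:thm-fedgp}. I would also note that the $\approx$ in the conclusion simply inherits the $\approx$ already present in Lemma~\ref{lemma:GP} together with the order-$1/m$ term dropped in the limit.

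The main obstacle is not this final two-line combination but the derivation of Lemma~\ref{lemma:GP} itself, which the excerpt takes verbatim from prior work: expanding $\E\|g_a-\bigl((1-\beta)\widehat g_a+\beta\,\texttt{Proj}_+(\widehat g_a\mid\widehat g_n)\bigr)\|^2_\pi$ requires controlling the positive-part projection and its cross-correlations with $g_a$ and $\widehat g_n$ in high dimensions. The $\tfrac{2\beta-\beta^2}{m}$ term arises because a generic direction $\widehat g_n$ captures only an order-$1/m$ fraction of the squared norm of $g_a-\widehat g_a$, while the $\bar\tau^2\|g_a-\widehat g_n\|^2_\pi$ term comes from decomposing $g_a-\widehat g_n$ into components aligned with and orthogonal to $\widehat g_n$, with $\bar\tau^2$ the expected squared sine of the corresponding angle. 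In writing the proof I would treat that concentration/projection argument as a black box and restrict the new content to the reduction above.
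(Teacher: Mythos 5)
Your proposal is correct and follows essentially the same route as the paper's own proof: subtract the closed-form delta errors from Lemma~\ref{lemma:AT} and Lemma~\ref{lemma:GP}, collect the coefficient of $\E_{\widehat\gD_a^t}\|g_a-\widehat g_a\|^2_\pi$ using $1-(1-\beta)^2=\beta(2-\beta)$, and discard the $O(1/m)$ correction as $m\to\infty$, treating the derivation of Lemma~\ref{lemma:GP} as a black box from prior work. Your intermediate factor $\beta(2-\beta)\bigl(1-\tfrac{1}{m}\bigr)$ is in fact the correct one; the paper's displayed $\bigl(1+\tfrac{1}{m}\bigr)$ appears to be a sign typo, which is immaterial in the limit.
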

    
\begin{proof}
 $\Delta^2_{\texttt{AT}} - \Delta^2_{\texttt{GP}}\\ \approx
  \E_{\widehat\gD_a^t} \|g_{a} - \widehat{g_{a}} \|^2_\pi - \left((1-\beta)^2+\frac{2\beta-\beta^2}{m}\right) \E_{\widehat\gD_a} \|g_{a} -  \widehat g_{a}\|^2_\pi   - \beta^2 \bar\tau^2 \|g_{a} -  \widehat g_{n}\|^2_\pi \\
    = \left(1 - ((1-\beta)^2+\frac{2\beta-\beta^2}{m}) \right) \E_{\widehat\gD_a^t} \|g_{a} - \widehat{g_{a}} \|^2_\pi - \beta^2 \bar\tau^2 \|g_{a} -  \widehat g_{n}\|^2_\pi \\
= (1 + \frac{1}{m}) \beta (2 - \beta) \E_{\widehat\gD_a^t} \|g_{a} - \widehat{g_{a}} \|^2_\pi - \beta^2 \bar\tau^2 \|g_{a} -  \widehat g_{n}\|^2_\pi$

When $m \rightarrow \infty$, we have a simplified version of the error difference as follows
\begin{align}
    \Delta^2_{AT} - \Delta^2_{GP} \approx \beta (2 - \beta) \E_{\widehat\gD_a^t} \|g_{a} - \widehat{g_{a}} \|^2_\pi - \beta^2 \bar\tau^2 \|g_{a} -  \widehat g_{n}\|^2_\pi
\end{align}


\end{proof}

\noindent\textbf{Interpretation.} When $\beta = 0.5$, we can usually show $\Delta^2_{AT} > \Delta^2_{GP}$, because $\beta (2 - \beta) > \beta^2 \bar\tau^2 (0.75 > 0.25)$ for the coefficients of two terms. We estimate the actual values of terms $E_{\widehat{D}_{a^t}} \|g_{a} - \widehat{g_{a}} \|^2_\pi$ (variance), $\|g_{a} -  \widehat g_{n}\|^2_\pi$ (bias), and $\bar\tau$ using the estimation methods in~\citet{anonymous2024principled}. Table~\ref{table:error-analysis} displays the values of those terms as well as the error differences on ResNet18 experiments at epoch $5, 10, 15, 20, 60$. We plot the changing of these terms on the ResNet18 experiment in Figure~\ref{fig:error-diff}. The order of difference is always positive and usually smaller than $1e^{-08}$ and approaches the order of $1e^{-12}$ in the end. 

\begin{table}[!h]
\centering
\caption{Estimations the actual values of terms $E_{\widehat{D_{a^t}}} \|g_{a} - \widehat{g_{a}} \|^2_\pi$ (variance), $\|g_{a} -  \widehat g_{n}\|^2_\pi$ (bias), $\bar\tau$, and $\Delta^2_{AT} - \Delta^2_{GP}$ (error differences) across different epochs.}
\label{table:error-analysis}
\begin{tabular}{lrrrrr}
Terms / epochs & 5          & 10         & 15         & 20         & 60         \\\hline
$E_{\widehat{D}_{a^t}} \|g_{a} - \widehat{g_{a}} \|^2_\pi$                   & 4.6017e-08 & 2.0448e-09 & 6.9623e-10 & 6.4329e-10 & 2.3849e-11 \\
$\|g_{a} -  \widehat g_{n}\|^2_\pi$                        & 0.0007     & 9.9098e-05 & 4.4932e-05 & 3.7930e-05 & 2.8391e-06 \\
$\bar\tau$                       & 0.0071     & 0.0052     & 0.0036     & 0.0038     & 0.0030     \\
$\Delta^2_{AT} - \Delta^2_{GP}$           & 2.5335e-08 & 8.5709e-10 & 3.7609e-10 & 3.4487e-10 & 1.1574e-11\\\hline
\end{tabular}
\end{table}

\begin{figure}
    \centering
    \includegraphics[width=0.75\linewidth]{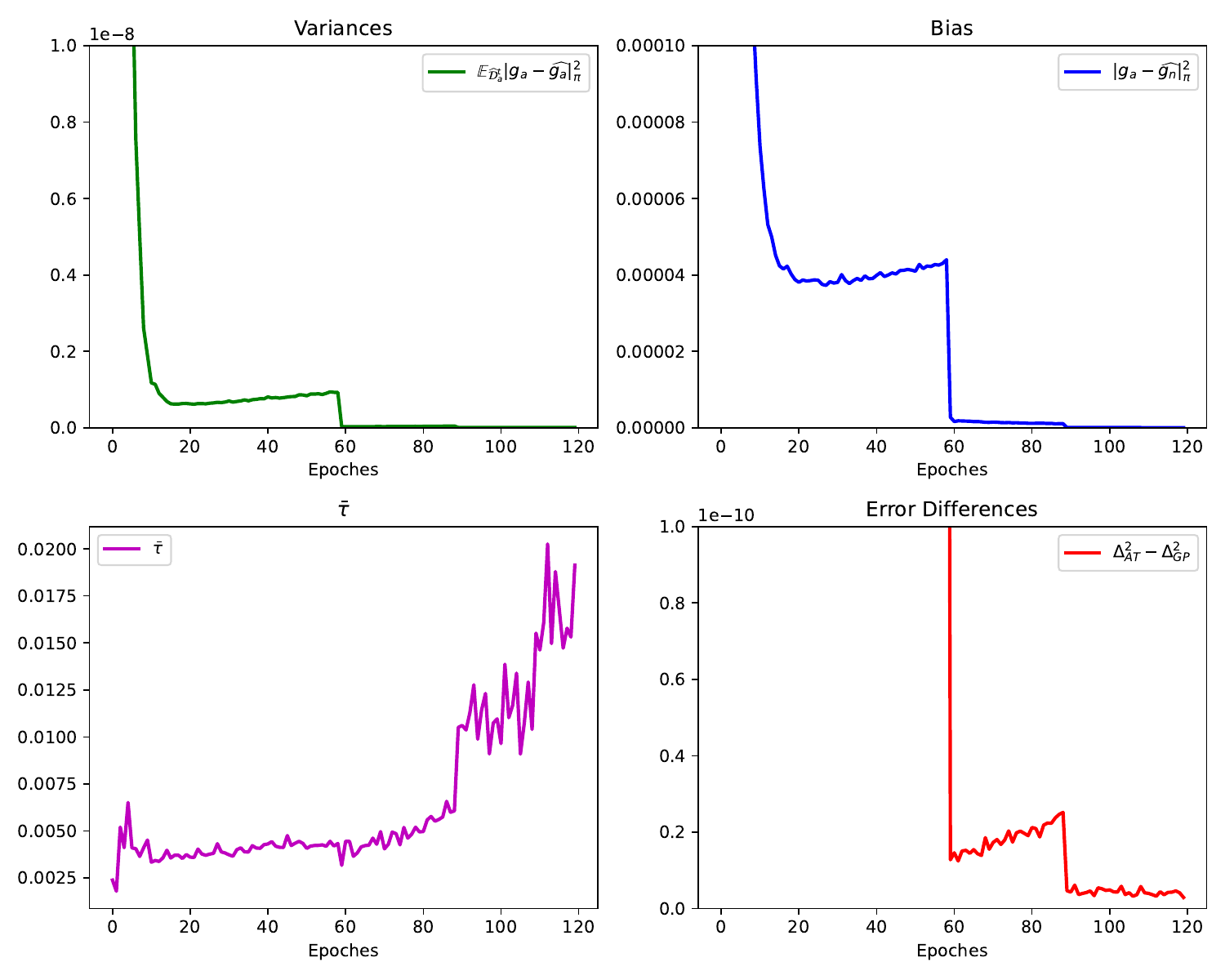}
    \caption{Plot of values of terms $E_{\widehat{D}_{a^t}} \|g_{a} - \widehat{g_{a}} \|^2_\pi$ (variance), $\|g_{a} -  \widehat g_{n}\|^2_\pi$ (bias), $\bar\tau$, and $\Delta^2_{AT} - \Delta^2_{GP}$ (error differences).}
    \label{fig:error-diff}
\end{figure}


\section{Additional Experiment Information}\label{supp}
In this section, we provide more training details, additional experiment results on the universal robustness of \textbf{RAMP} to common corruptions and unseen adversaries, runtime analysis of RAMP, additional ablation studies on different logit pairing losses, and AT from random initialization results on CIFAR-10 using WideResNet-28-10.
\subsection{More Training Details}
We set the batch size to $128$ for the experiments on ResNet-18 and WideResNet-28-10 architectures. We use an SGD optimizer with $0.9$ momentum and $5e^{-4}$ weight decay. For other experiments on ImageNet, we use a batch size of $64$ to fit into the GPU memory for larger models. For all training procedures, we select the last checkpoint for the comparison. When the pre-trained model was originally trained with extra data beyond the CIFAR-10 dataset, similar to~\citet{croce2022eat}, we use the extra $500$k images introduced by~\citet{carmon2019unlabeled} for fine-tuning, and each batch contains the same amount of standard and extra images. An epoch is completed when the whole standard training set has been used. 

\subsection{Runtime Analysis of RAMP}\label{time-analysis}
We present runtime analysis results demonstrating the fact that RAMP is more expensive than E-AT and less expensive than MAX in Table~\ref{table:runtime}. These results, recorded in seconds per epoch, were obtained using a single A100 40GB GPU. RAMP consistently supports that fact in all experiments. 

\begin{table}[!h]
\centering
\caption{Analysis of time per epoch for RAMP and related baselines. RAMP is more expensive than E-AT and less expensive than MAX.}
\begin{tabular}{lrrr}
Models \textbackslash Methods & E-AT~\citep{croce2022eat}                         & MAX    & \textbf{RAMP} \\\hline
CIFAR-10 RN-18 scratch           & 78                           & 219    & 157           \\
CIFAR-10 WRN-28-10 scratch       & 334                          & 1048   & 660           \\
CIFAR-10 RN-50                   & 188                          & 510    & 388           \\
CIFAR-10 WRN-34-20               & 1094                         & 2986   & 2264          \\
CIFAR-10 WRN-28-10 carmon        & {\color[HTML]{0E0E0E} 546}   & 1420   & 1110          \\
CIFAR-10 WRN-28-10 gowal         & {\color[HTML]{0E0E0E} 698}   & 1895   & 1456          \\
CIFAR-10 WRN-70-16               & {\color[HTML]{0E0E0E} 3486}  & 10330  & 7258          \\
ImageNet ResNet50             & {\color[HTML]{0E0E0E} 15656} & 41689  & 35038         \\
ImageNet Transformer          & 38003                        & 101646 & 81279        \\\hline
\end{tabular}\label{table:runtime}
\end{table}

\subsection{Additional Results on \textbf{RAMP} Generalizing to Common Corruptions and Unseen Adversaries for Universal Robustness}\label{genality-other}

In this section, we show \textbf{RAMP} can generalize better to other corruptions and unseen adversaries on union accuracy for stronger universal robustness. 

\textbf{Implementations.} For the $l_0$ attack, we use~\citet{croce2022sparse} with an epsilon of $9$ pixels and $5k$ query points. For common corruptions, we directly use the implementation of RobustBench~\citep{croce2020robustbench} for evaluation across 5 severity levels on all corruption types used in~\citet{hendrycks2019benchmarking}. For other unseen adversaries, we follow the implementation of~\citet{laidlaw2020perceptual}, where we set $eps=12$ for the fog attack, $eps = 0.5$ for the snow attack, $eps=60$ for the gabor attack, $eps=0.125$ for the elastic attack, and $eps=0.125$ for the jpeglinf attack with $100$ iterations. For ResNet-18 experiments, we do not compare with Winninghand~\citep{diffenderfer2021winning} since it uses a Wide-ResNet architecture. Also, we select the strongest baselines (E-AT and MAX) from the Wide-ResNet experiment results to compare for ResNet-18 experiments on universal robustness. 

\textbf{Results.} For the ResNet-18 training from scratch experiment on CIFAR-10, in Table~\ref{table:comm-corr-18} and ~\ref{table:unseen-18}, we also show \textbf{RAMP} generally outperforms by $0.5\%$ on common corruptions and $7\%$ on union accuracy against unseen adversaries compared with E-AT.

\begin{table}[!h]
\centering
\caption{Accuracy against common corruptions using ResNet-18 on CIFAR-10 dataset.}
\label{table:comm-corr-18}
\begin{tabular}{lr}
Models        & \multicolumn{1}{l}{common corruptions} \\\hline
E-AT          & 73.8                                   \\
MAX           & 75.1                                   \\
\textbf{RAMP} & 74.3       \\\hline                           
\end{tabular}
\end{table}

\begin{table}[!h]
\centering
\caption{Individual, average, and union accuracy against unseen adversaries using ResNet-18 on CIFAR-10 dataset.}
\label{table:unseen-18}
\begin{tabular}{lrrrrrrrr}
Models        & \multicolumn{1}{l}{$l_0$} & \multicolumn{1}{l}{fog} & \multicolumn{1}{l}{snow} & \multicolumn{1}{l}{gabor} & \multicolumn{1}{l}{elastic} & \multicolumn{1}{l}{jpeglinf} & \multicolumn{1}{l}{Avg} & \multicolumn{1}{l}{Union} \\\hline
E-AT          & 58.5                                    & 41.8                              & 30.8                               & 45.9                               & 55.0                                      & 59.1                                & 48.5                    & 18.8                      \\
MAX           & 70.8                                    & 40.0                                & 34.4                               & 45.1                               & 54.8                                    & 56.8                                & 50.3                    & 20.6                      \\
\textbf{RAMP} & 56.8                                    & 40.5                              & 40.5                               & 50.0                                 & 59.2                                    & 56.2                                & \textbf{50.5}           & \textbf{25.9}  \\\hline          
\end{tabular}
\end{table}

\subsection{Additional Experiments with Different Epsilon Values}\label{additional-diff-eps}

In this section, we provide additional results with different $\epsilon_1, \epsilon_2, \epsilon_\infty$ values. We select $\epsilon_\infty = [\frac{2}{255}, \frac{4}{255}, \frac{12}{255}, \frac{16}{255}]$, $\epsilon_1 = [6, 9, 12, 15]$, and $\epsilon_2 = [0.25, 0.75, 1.0, 1.5]$. We provide additional \textbf{RAMP} results compared with related baselines with training from scratch and performing robust fine-tuning in Section~\ref{vary-eps-scratch-res} and Section~\ref{vary-eps-finetune-res}, respectively. We observe that \textbf{RAMP} can surpass E-AT with significant margins as well as a better accuracy-robustness tradeoff for both training from scratch and robust fine-tuning with $\lambda = 2.0$ for training from scratch and $\lambda = 0.5$ for robust fine-tuning in most cases.

\subsubsection{Additional Results with Training from Scratch}\label{vary-eps-scratch-res}
\textbf{Changing $l_\infty$ perturbations with $\epsilon_\infty = [\frac{2}{255}, \frac{4}{255}, \frac{12}{255}, \frac{16}{255}]$.} Table~\ref{table:scratch-linf1} and Table~\ref{table:scratch-linf2} show that \textbf{RAMP} consistently outperforms E-AT~\citep{croce2022eat} on union accuracy when training from scratch.

\begin{table}[h!]
\caption{$(\epsilon_\infty = \mathbf{\frac{2}{255}}, \epsilon_1 = 12, \epsilon_2 = 0.5)$ and $(\epsilon_\infty = \mathbf{\frac{4}{255}}, \epsilon_1 = 12, \epsilon_2 = 0.5)$ with random initializations.}
    \begin{minipage}{.5\linewidth}
\centering
\vspace{2mm}
\begin{tabular}{lrrrrr}
     & Clean & $l_\infty$ & $l_2$   & $l_1$   & Union         \\\hline
E-AT & 87.2  & 73.3 & 64.1 & 55.4 & 55.4          \\
\bf RAMP & 86.3  & 73.3 & 64.9 & 59.1 & \textbf{59.1} \\\hline
\end{tabular}
\end{minipage}%
    \begin{minipage}{.5\linewidth}
\centering
\vspace{2mm}
\begin{tabular}{lrrrrr}
     & Clean & $l_\infty$ & $l_2$   & $l_1$   & Union         \\\hline
E-AT & 86.8  & 58.9 & 66.4 & 54.6 & 53.7          \\
\bf RAMP & 86.1  & 60.0 & 67.4 & 58.5 & \textbf{57.4}\\\hline
\end{tabular}
\end{minipage}
\label{table:scratch-linf1}
\end{table}

\begin{table}[h!]
\caption{$(\epsilon_\infty = \mathbf{\frac{12}{255}}, \epsilon_1 = 12, \epsilon_2 = 0.5)$ and $(\epsilon_\infty = \mathbf{\frac{16}{255}}, \epsilon_1 = 12, \epsilon_2 = 0.5)$ with random initializations.}
    \begin{minipage}{.5\linewidth}
\centering
\vspace{2mm}
\begin{tabular}{lrrrrr}
     & Clean & $l_\infty$ & $l_2$   & $l_1$   & Union         \\\hline
E-AT & 77.5  & 28.8 & 64.0 & 50.1 & 28.7          \\
\bf RAMP & 73.7  & 34.6 & 59.1 & 38.9 & \textbf{33.3} \\\hline
\end{tabular}
\end{minipage}%
    \begin{minipage}{.5\linewidth}
\centering
\vspace{2mm}
\begin{tabular}{lrrrrr}
     & Clean & $l_\infty$ & $l_2$   & $l_1$   & Union         \\\hline
E-AT & 69.4  & 18.8 & 58.7 & 47.7 & 18.7          \\
\bf RAMP & 65.0  & 25.7 & 49.8 & 32.6 & \textbf{25.0}\\\hline
\end{tabular}
\end{minipage}
\label{table:scratch-linf2}
\end{table}

\noindent\textbf{Changing $l_1$ perturbations with $\epsilon_1 = [6, 9, 12, 15]$.} Table~\ref{table:scratch-l11} and Table~\ref{table:scratch-l12} show that \textbf{RAMP} consistently outperforms E-AT~\citep{croce2022eat} on union accuracy when training from scratch.

\begin{table}[h!]
\caption{$(\epsilon_\infty = \frac{8}{255}, \mathbf{\epsilon_1 = 6}, \epsilon_2 = 0.5)$ and $(\epsilon_\infty = \frac{8}{255}, \mathbf{\epsilon_1 = 9}, \epsilon_2 = 0.5)$ with random initializations.}
    \begin{minipage}{.5\linewidth}
\centering
\vspace{2mm}
\begin{tabular}{lrrrrr}
     & Clean & $l_\infty$ & $l_2$   & $l_1$   & Union         \\\hline
E-AT & 85.5  & 43.1 & 67.9 & 63.9 & 42.8          \\
\bf RAMP & 83.8  & 48.1 & 63.0 & 51.2 & \bf46.0\\\hline
\end{tabular}
\end{minipage}%
    \begin{minipage}{.5\linewidth}
\centering
\vspace{2mm}
\begin{tabular}{lrrrrr}
     & Clean & $l_\infty$ & $l_2$   & $l_1$   & Union         \\\hline
E-AT & 84.6  & 41.8 & 67.7 & 57.6 & 41.4          \\
\bf RAMP & 82.6  & 47.5 & 65.7 & 50.8 & \textbf{45.9}\\\hline
\end{tabular}
\end{minipage}
\label{table:scratch-l11}
\end{table}

\begin{table}[h!]
\caption{$(\epsilon_\infty = \frac{8}{255}, \mathbf{\epsilon_1 = 15}, \epsilon_2 = 0.5)$ and $(\epsilon_\infty = \frac{8}{255}, \mathbf{\epsilon_1 = 18}, \epsilon_2 = 0.5)$ with random initializations.}
    \begin{minipage}{.5\linewidth}
\centering
\vspace{2mm}
\begin{tabular}{lrrrrr}
     & Clean & $l_\infty$ & $l_2$   & $l_1$   & Union         \\\hline
E-AT & 81.9  & 40.2 & 66.9 & 48.7 & 39.2          \\
\bf RAMP & 80.9  & 45.0 & 66.4 & 46.7 & \textbf{43.3} \\\hline
\end{tabular}
\end{minipage}%
    \begin{minipage}{.5\linewidth}
\centering
\vspace{2mm}
\begin{tabular}{lrrrrr}
     & Clean & $l_\infty$ & $l_2$   & $l_1$   & Union         \\\hline
E-AT & 81.0  & 39.8 & 65.8 & 44.3 & 38.0          \\
\bf RAMP & 79.9  & 43.5 & 65.7 & 45.0 & \textbf{41.9}\\\hline
\end{tabular}
\end{minipage}
\label{table:scratch-l12}
\end{table}

\noindent\textbf{Changing $l_2$ perturbations with $\epsilon_2 = [0.25, 0.75, 1.0, 1.5]$.} Table~\ref{table:scratch-l21} and Table~\ref{table:scratch-l22} show that \textbf{RAMP} consistently outperforms E-AT~\citep{croce2022eat} on union accuracy when training from scratch.

\begin{table}[h!]
\caption{$(\epsilon_\infty = \frac{8}{255}, \epsilon_1 = 12, \mathbf{\epsilon_2 = 0.25})$ and $(\epsilon_\infty = \frac{8}{255}, \epsilon_1 = 12, \mathbf{\epsilon_2 = 0.75})$ with random initializations.}
    \begin{minipage}{.5\linewidth}
\centering
\vspace{2mm}
\begin{tabular}{lrrrrr}
     & Clean & $l_\infty$ & $l_2$   & $l_1$   & Union         \\\hline
E-AT & 82.8  & 41.3 & 75.6 & 52.9 & 40.5          \\
\bf RAMP & 81.8  & 46.0 & 74.7 & 48.8 & \textbf{44.5} \\\hline
\end{tabular}
\end{minipage}%
    \begin{minipage}{.5\linewidth}
\centering
\vspace{2mm}
\begin{tabular}{lrrrrr}
     & Clean & $l_\infty$ & $l_2$   & $l_1$   & Union         \\\hline
E-AT & 83.0  & 41.2 & 57.6 & 53.0 & 40.5          \\
\bf RAMP & 81.9  & 46.1 & 56.9 & 48.7 & \textbf{44.5}\\\hline
\end{tabular}
\end{minipage}
\label{table:scratch-l21}
\end{table}

\begin{table}[h!]
\caption{$(\epsilon_\infty = \frac{8}{255}, \epsilon_1 = 12, \mathbf{\epsilon_2 = 1.0})$ and $(\epsilon_\infty = \frac{8}{255}, \epsilon_1 = 12, \mathbf{\epsilon_2 = 1.5})$ with random initializations.}
    \begin{minipage}{.5\linewidth}
\centering
\vspace{2mm}
\begin{tabular}{lrrrrr}
     & Clean & $l_\infty$ & $l_2$   & $l_1$   & Union         \\\hline
E-AT & 83.4  & 41.0 & 47.3 & 52.8 & 40.3          \\
\bf RAMP\\ ($\lambda$=5) & 81.5  & 46.0 & 46.5 & 48.1 & \textbf{44.1}\\\hline
\end{tabular}
\end{minipage}%
    \begin{minipage}{.5\linewidth}
\centering
\vspace{2mm}
\begin{tabular}{lrrrrr}
     & Clean & $l_\infty$ & $l_2$   & $l_1$   & Union         \\\hline
E-AT & 83.5  & 41.0 & 25.5 & 52.9 & 25.5          \\
\bf RAMP & 74.4  & 43.4 & 37.2 & 51.1 & \textbf{37.1}\\\hline
\end{tabular}
\end{minipage}
\label{table:scratch-l22}
\end{table}

\subsubsection{Additional Results with Robust Fine-tuning}\label{vary-eps-finetune-res}
\textbf{Changing $l_\infty$ perturbations with $\epsilon_\infty = [\frac{2}{255}, \frac{4}{255}, \frac{12}{255}, \frac{16}{255}]$.} Table~\ref{table:finetune-linf1} and Table~\ref{table:finetune-linf2} show that \textbf{RAMP} consistently outperforms E-AT~\citep{croce2022eat} on union accuracy when performing robust fine-tuning.

\begin{table}[h!]
\caption{$(\epsilon_\infty = \mathbf{\frac{2}{255}}, \epsilon_1 = 12, \epsilon_2 = 0.5)$ and $(\epsilon_\infty = \mathbf{\frac{4}{255}}, \epsilon_1 = 12, \epsilon_2 = 0.5)$ with robust fine-tuning.}
    \begin{minipage}{.5\linewidth}
\centering
\vspace{2mm}
\begin{tabular}{lrrrrr}
     & Clean & $l_\infty$ & $l_2$   & $l_1$   & Union         \\\hline
E-AT & 86.5  & 74.8 & 66.7 & 57.9 & 57.9          \\
\bf RAMP & 85.8  & 74.0 & 66.2 & 60.1 & \textbf{60.1} \\\hline
\end{tabular}
\end{minipage}%
    \begin{minipage}{.5\linewidth}
\centering
\vspace{2mm}
\begin{tabular}{lrrrrr}
     & Clean & $l_\infty$ & $l_2$   & $l_1$   & Union         \\\hline
E-AT & 85.9  & 61.4 & 67.9 & 57.6 & 56.8          \\
\bf RAMP & 85.7  & 60.9 & 67.6 & 59.3 & \textbf{58.1}\\\hline
\end{tabular}
\end{minipage}
\label{table:finetune-linf1}
\end{table}

\begin{table}[h!]
\caption{$(\epsilon_\infty = \mathbf{\frac{12}{255}}, \epsilon_1 = 12, \epsilon_2 = 0.5)$ and $(\epsilon_\infty = \mathbf{\frac{16}{255}}, \epsilon_1 = 12, \epsilon_2 = 0.5)$ with robust fine-tuning.}
    \begin{minipage}{.5\linewidth}
\centering
\vspace{2mm}
\begin{tabular}{lrrrrr}
     & Clean & $l_\infty$ & $l_2$   & $l_1$   & Union         \\\hline
E-AT & 75.5  & 30.8 & 62.4 & 44.6 & 30.0          \\
\bf RAMP & 74.0  & 33.6 & 59.7 & 38.5 & \textbf{31.9} \\\hline
\end{tabular}
\end{minipage}%
    \begin{minipage}{.5\linewidth}
\centering
\vspace{2mm}
\begin{tabular}{lrrrrr}
     & Clean & $l_\infty$ & $l_2$   & $l_1$   & Union         \\\hline
E-AT & 68.7  & 20.7 & 56.1 & 42.1 & 20.5          \\
\bf RAMP & 65.6  & 25.0 & 51.5 & 31.2 & \textbf{23.8}\\\hline
\end{tabular}
\end{minipage}
\label{table:finetune-linf2}
\end{table}

\noindent\textbf{Changing $l_1$ perturbations with $\epsilon_1 = [6, 9, 12, 15]$.} Table~\ref{table:scratch-l11} and Table~\ref{table:scratch-l12} show that \textbf{RAMP} consistently outperforms E-AT~\citep{croce2022eat} on union accuracy when performing robust fine-tuning.

\begin{table}[h!]
\caption{$(\epsilon_\infty = \frac{8}{255}, \mathbf{\epsilon_1 = 6}, \epsilon_2 = 0.5)$ and $(\epsilon_\infty = \frac{8}{255}, \mathbf{\epsilon_1 = 9}, \epsilon_2 = 0.5)$ with robust fine-tuning.}
    \begin{minipage}{.5\linewidth}
\centering
\vspace{2mm}
\begin{tabular}{lrrrrr}
     & Clean & $l_\infty$ & $l_2$   & $l_1$   & Union         \\\hline
E-AT & 84.2  & 45.8 & 66.8 & 59.0 & 45.0          \\
\bf RAMP \\
($\lambda$=1.5) & 83.0  & 48.7 & 63.5 & 51.7 & \textbf{46.4}\\\hline
\end{tabular}
\end{minipage}%
    \begin{minipage}{.5\linewidth}
\centering
\vspace{2mm}
\begin{tabular}{lrrrrr}
     & Clean & $l_\infty$ & $l_2$   & $l_1$   & Union         \\\hline
E-AT & 83.1  & 44.9 & 67.2 & 52.6 & 43.2          \\
\bf RAMP & 82.5  & 47.1 & 66.0 & 49.9 & \textbf{44.8}\\\hline
\end{tabular}
\end{minipage}
\label{table:finetune-l11}
\end{table}

\begin{table}[h!]
\caption{$(\epsilon_\infty = \frac{8}{255}, \mathbf{\epsilon_1 = 15}, \epsilon_2 = 0.5)$ and $(\epsilon_\infty = \frac{8}{255}, \mathbf{\epsilon_1 = 18}, \epsilon_2 = 0.5)$ with robust fine-tuning.}
    \begin{minipage}{.5\linewidth}
\centering
\vspace{2mm}
\begin{tabular}{lrrrrr}
     & Clean & $l_\infty$ & $l_2$   & $l_1$   & Union         \\\hline
E-AT & 81.3  & 43.5 & 66.6 & 42.8 & 39.0          \\
\bf RAMP & 80.4  & 44.2 & 66.1 & 44.4 & \textbf{41.2} \\\hline
\end{tabular}
\end{minipage}%
    \begin{minipage}{.5\linewidth}
\centering
\vspace{2mm}
\begin{tabular}{lrrrrr}
     & Clean & $l_\infty$ & $l_2$   & $l_1$   & Union         \\\hline
E-AT & 81.3  & 38.9 & 66.6 & 45.0 & 37.5          \\
\bf RAMP & 80.7  & 40.6 & 66.3 & 43.5 & \textbf{38.8}\\\hline
\end{tabular}
\end{minipage}
\label{table:finetune-l12}
\end{table}

\noindent\textbf{Changing $l_2$ perturbations with $\epsilon_2 = [0.25, 0.75, 1.0, 1.5]$.} Table~\ref{table:scratch-l21} and Table~\ref{table:scratch-l22} show that \textbf{RAMP} consistently outperforms E-AT~\citep{croce2022eat} on union accuracy when performing robust fine-tuning.

\begin{table}[h!]
\caption{$(\epsilon_\infty = \frac{8}{255}, \epsilon_1 = 12, \mathbf{\epsilon_2 = 0.25})$ and $(\epsilon_\infty = \frac{8}{255}, \epsilon_1 = 12, \mathbf{\epsilon_2 = 0.75})$ with robust fine-tuning.}
    \begin{minipage}{.5\linewidth}
\centering
\vspace{2mm}
\begin{tabular}{lrrrrr}
     & Clean & $l_\infty$ & $l_2$   & $l_1$   & Union         \\\hline
E-AT & 82.3  & 44.2 & 75.3 & 47.2 & 41.4          \\
\bf RAMP & 81.5  & 45.6 & 74.4 & 47.1 & \textbf{43.1}\\\hline
\end{tabular}
\end{minipage}%
    \begin{minipage}{.5\linewidth}
\centering
\vspace{2mm}
\begin{tabular}{lrrrrr}
     & Clean & $l_\infty$ & $l_2$   & $l_1$   & Union         \\\hline
E-AT & 83.0  & 43.5 & 58.1 & 46.5 & 40.4          \\
\bf RAMP & 81.4  & 45.6 & 57.4 & 47.2 & \textbf{42.9}\\\hline
\end{tabular}
\end{minipage}
\label{table:finetune-l21}
\end{table}

\begin{table}[h!]
\caption{$(\epsilon_\infty = \frac{8}{255}, \epsilon_1 = 12, \mathbf{\epsilon_2 = 1.0})$ and $(\epsilon_\infty = \frac{8}{255}, \epsilon_1 = 12, \mathbf{\epsilon_2 = 1.5})$ with robust fine-tuning.}
    \begin{minipage}{.5\linewidth}
\centering
\vspace{2mm}
\begin{tabular}{lrrrrr}
     & Clean & $l_\infty$ & $l_2$   & $l_1$   & Union         \\\hline
E-AT & 82.3  & 41.0 & 49.0 & 51.6 & 40.2          \\
\bf RAMP & 81.4  & 45.6 & 47.8 & 47.1 & \textbf{42.9} \\\hline
\end{tabular}
\end{minipage}%
    \begin{minipage}{.5\linewidth}
\centering
\vspace{2mm}
\begin{tabular}{lrrrrr}
     & Clean & $l_\infty$ & $l_2$   & $l_1$   & Union         \\\hline
E-AT & 80.2  & 42.8 & 31.5 & 52.4 & 31.5          \\
\bf RAMP & 74.9  & 43.7 & 37.0 & 50.2 & \textbf{36.9}\\\hline
\end{tabular}
\end{minipage}
\label{table:finetune-l22}
\end{table}

\subsection{Different Logit Pairing Methods} In this section, we test \textbf{RAMP} with robust fine-tuning using two more different logit pairing losses: (1) Mean Squared Error Loss ($\mathcal{L}_{mse}$) (Eq.~\ref{eq:11}), (2) Cosine-Similarity Loss ($\mathcal{L}_{cos}$) (Eq.~\ref{eq:12}). We replace the KL loss we used in the paper using the following losses. We use the same lambda value $\lambda=1.5$ for both cases.

\begin{equation}\label{eq:11}
    \mathcal{L}_{mse} = \frac{1}{n_c} \cdot \sum^{n_c}_{i=0} \frac{1}{2} \left(p_{q}[\gamma[i]] - p_{r}[\gamma[i]]\right)^2
\end{equation}

\begin{equation}\label{eq:12}
    \mathcal{L}_{cos} = \frac{1}{n_c} \cdot \sum^{n_c}_{i=0} \left(1 - \cos (p_{q}[\gamma[i]], p_{r}[\gamma[i]])\right)
\end{equation}

Table~\ref{table:logit-pairing-losses} displays \textbf{RAMP} robust fine-tuning results of different logit pairing losses using PreAct-ResNet-18 on CIFAR-10 with $\lambda = 1.5$. We see those losses generally improve union accuracy compared with baselines in Table~\ref{table:resnet18-robust-finetune}. $\mathcal{L}_{cos}$ has a better clean accuracy yet slightly worsened union accuracy. $\mathcal{L}_{mse}$ has the best union accuracy and the worst clean accuracy. $\mathcal{L}_{KL}$ is in the middle of the two others. However, we acknowledge the possibility that each logit pairing loss may have its own best-tuned $\lambda$ value.

\begin{table}[h!]
\centering
\caption{\textbf{RAMP} fine-tuning results of different logit pairing losses using PreAct-ResNet-18 on CIFAR-10.}
\begin{tabular}{lrrrrr}
   Losses                & {Clean} & $l_\infty$ & $l_2$ & $l_1$ & Union \\\hline
KL & 80.9             & \textbf{45.5}            & 66.2        & 47.3          & 43.1            \\
MSE & 80.4                      & \bf45.6                     & 65.8                   & \bf47.6                   & \bf43.5\\
Cosine &         \bf81.6              &     45.4            &    \bf66.7         &    47.0         & 42.9\\\hline                     
\end{tabular}

\label{table:logit-pairing-losses}
\end{table}

\subsection{AT from Scratch Using WideResNet-28-10}\label{wideresnet-exp}
\noindent\textbf{Implementations.} We use a cyclic learning rate with a maximum rate of $0.1$ for $30$ epochs and adopt the outer minimization trades loss from~\citet{zhang2019trades} with the default hyperparameters, same as~\citet{croce2022eat}; also, we set $\lambda=2.0$ and $\beta = 0.5$ for training \textbf{RAMP}. Additionally, we use the WideResNet-28-10 architecture same as~\citet{zagoruyko2016wide} for our reimplementations on CIFAR-10. 

\noindent\textbf{Results.} Since the implementation of experiments on WideResNet-28-10 in~\citet{croce2022eat} paper is not public at present, we report our implementation results on E-AT, where our results show that \textbf{RAMP} outperforms E-AT in union accuracy with a significant margin, as shown in Table~\ref{table:wideresnet-robust-pretrain}. Also, we experiment with using the trade loss (\textbf{RAMP w trades}) for the outer minimization, we observe that \textbf{RAMP w trades} achieves a better union accuracy at the loss of some clean accuracy.

\begin{table}[!h]
    \centering
       \caption{\textbf{WideResNet-28-10 trained from random initialization} on CIFAR-10. \textbf{RAMP} outperforms E-AT on union accuracy with our implementation.}
    \begin{tabular}{lrrrrr}
    \text {Methods} & \text { Clean } & $l_{\infty}$ & $l_2$ & $l_1$ &  \text { Union }\\
\hline 
E-AT w trades (reported in~\citet{croce2022eat}) & 79.9 & 46.6 & 66.2 & 56.0 & 46.4 \\
E-AT w trades (ours) & 79.2 & 44.2 & 64.9 & 54.9 & 44.0\\
\textbf{RAMP w/o trades} (ours) & 81.1 & 46.6 & 65.9 & 48.1 & 44.6 \\
\textbf{RAMP w trades} (ours) & 79.9 & 47.1 & 65.1 & 49.0 & \bf45.8\\
\hline
\end{tabular}
 
    \label{table:wideresnet-robust-pretrain}
\end{table}

\subsection{Robust Fine-tuning Using PreAct-ResNet-18}\label{robust-finetune-rn18-exp}
\textbf{Implementations.} For robust fine-tuning with ResNet-18, we perform $3$ epochs on CIFAR-10. We set the learning rate as $0.05$ for PreAct-ResNet-18 and $0.01$ for other models. We set $\lambda=0.5$ in this case. Also, we reduce the learning rate by a factor of $10$ after completing each epoch.

\noindent\textbf{Result.} Table~\ref{table:resnet18-robust-finetune} shows the robust fine-tuning results using PreAct ResNet-18 model on the CIFAR-10 dataset with different methods. The results for all baselines are directly from the E-AT paper~\citep{croce2022eat} where the authors reimplemented other baselines (e.g., MSD, MAX) to achieve better union accuracy than presented in the original works. \textbf{RAMP} surpasses all other methods on union accuracy. 

\begin{table*}[!h]
    \centering
       \caption{\textbf{RN-18 $l_\infty$-AT model fine-tuned} for 3 epochs (repeated for 5 seeds). \textbf{RAMP} has the highest union accuracy. Baseline results are from~\citet{croce2022eat}.}
    \begin{tabular}{lrrrrr}
\text {Methods} & \text { Clean } & $l_{\infty}$ & $l_2$ & $l_1$ & \text { Union }\\
\hline RN-18- $l_{\infty}$-AT & $83.7$ & $48.1$ & $59.8$ & $7.7$ & $38.5$ \\
+ SAT & $83.5 \pm 0.2$ & $43.5 \pm 0.2$ & $68.0 \pm 0.4$ & $47.4 \pm 0.5$ & $41.0 \pm 0.3$ \\
+ AVG & $84.2 \pm 0.4$ & $43.3 \pm 0.4$ & $68.4 \pm 0.6$ & $46.9 \pm 0.6$ & $40.6 \pm 0.4$\\
+ MAX & $82.2 \pm 0.3$ & $45.2 \pm 0.4$ & $67.0 \pm 0.7$ & $46.1 \pm 0.4$ & $42.2 \pm 0.6$\\
+ MSD & $82.2 \pm 0.4$ & $44.9 \pm 0.3$ & $67.1 \pm 0.6$ & $47.2 \pm 0.6$ & $42.6 \pm 0.2$ \\
+ E-AT & $82.7 \pm 0.4$ & $44.3 \pm 0.6$ & $68.1 \pm 0.5$ & $48.7 \pm 0.5$ & $42.2 \pm 0.8$ \\
+ \textbf{RAMP} ($\lambda$=1.5) & $81.1 \pm 0.2$ & $45.4 \pm 0.3$ & $66.1 \pm 0.2$ & $47.2 \pm 0.1$ & $\mathbf{43.1 \pm 0.2}$ \\
+ \textbf{RAMP} ($\lambda$ = 0.5) & $81.5 \pm 0.1$ & $45.5 \pm 0.2$ & $66.4 \pm 0.2$ & $47.0 \pm 0.1$ & $\mathbf{42.9 \pm 0.2}$ \\\hline
\end{tabular}
 
    \label{table:resnet18-robust-finetune}
\end{table*}

\subsection{Robust Fine-tuning with More Epochs}\label{robust-finetune-epochs}
In Table~\ref{table:finetune-more-epochs}, we apply robust fine-tuning on the PreAct ResNet-18 model for the CIFAR-10 dataset with $5,7,10,15$ epochs, and compare it with E-AT. \textbf{RAMP} consistently outperforms the baseline on union accuracy, with a larger improvement when we increase the number of epochs.
\begin{table}
\centering
\caption{\textbf{Fine-tuning with more epochs}: \textbf{RAMP} consistently outperforms E-AT on union accuracy. E-AT results are from~\citet{croce2022eat}.}
\begin{tabular}{@{}lllllllll@{}}
              & \multicolumn{2}{c}{5 epochs}                                 & \multicolumn{2}{c}{7 epochs}                                 & \multicolumn{2}{c}{10 epochs}                                & \multicolumn{2}{c}{15 epochs}                                \\\hline
              & \multicolumn{1}{r}{Clean} & \multicolumn{1}{l}{Union} & \multicolumn{1}{r}{Clean} & \multicolumn{1}{l}{Union} & \multicolumn{1}{r}{Clean} & \multicolumn{1}{l}{Union} & \multicolumn{1}{r}{Clean} & \multicolumn{1}{l}{Union} \\\hline
E-AT          & 83.0                      & 43.1                      & 83.1                      & 42.6                      & 84.0                      & 42.8                      & 84.6                      & 43.2                      \\
\textbf{RAMP} & 81.7                      & \textbf{43.6}             & 82.1                      & \textbf{43.8}             & 82.5                      & \textbf{44.6}             & 83.0                      & \textbf{44.9}  \\\hline     
\end{tabular}

\label{table:finetune-more-epochs}
\end{table}




\section{Additional Visualization Results}
In this section, we provide additional t-SNE visualizations of the multiple-norm tradeoff and robust fine-tuning procedures using different methods.

\subsection{Pre-trained $l_1, l_2, l_\infty$ AT models}
Figure~\ref{fig:full-pretrain} shows the robust accuracy of $l_1, l_2, l_\infty$ AT models against their respect $l_1, l_2, l_\infty$ perturbations, on CIFAR-10 using PreAct-ResNet-18 architecture. Similar to Figure~\ref{fig:l1-linf-visual}, $l_\infty$-AT model has a low $l_1$ robustness and vice versa. In this common choice of epsilons, we further confirm that $l_\infty - l_1$ is the key trade-off pair.

\subsection{Robust Fine-tuning for all Epochs}
We provide the complete visualizations of robust fine-tuning for 3 epochs on CIFAR-10 using $l_1$ examples, E-AT, and \textbf{RAMP}. Rows in $l_1$ fine-tuning (Figure~\ref{fig:full-finetune-l1}), E-AT fine-tuning (Figure~\ref{fig:full-finetune-eat}), and \textbf{RAMP} fine-tuning (Figure~\ref{fig:full-finetune-ramp}) show the robust accuracy against $l_\infty, l_1, l_2$ attacks individually, of epoch $0,1,2,3$, respectively. We observe that throughout the procedure, \textbf{RAMP} manages to maintain more $l_\infty$ robustness during the fine-tuning with more points colored in cyan, in comparison with two other methods. This visualization confirms that after we identify a $l_p - l_r (p\neq r)$ key tradeoff pair, \textbf{RAMP} successfully preserves more $l_p$ robustness when training with some $l_r$ examples via enforcing union predictions with the logit pairing loss.

\begin{figure}[!h]
    \centering
    \includegraphics[width=0.5\columnwidth]{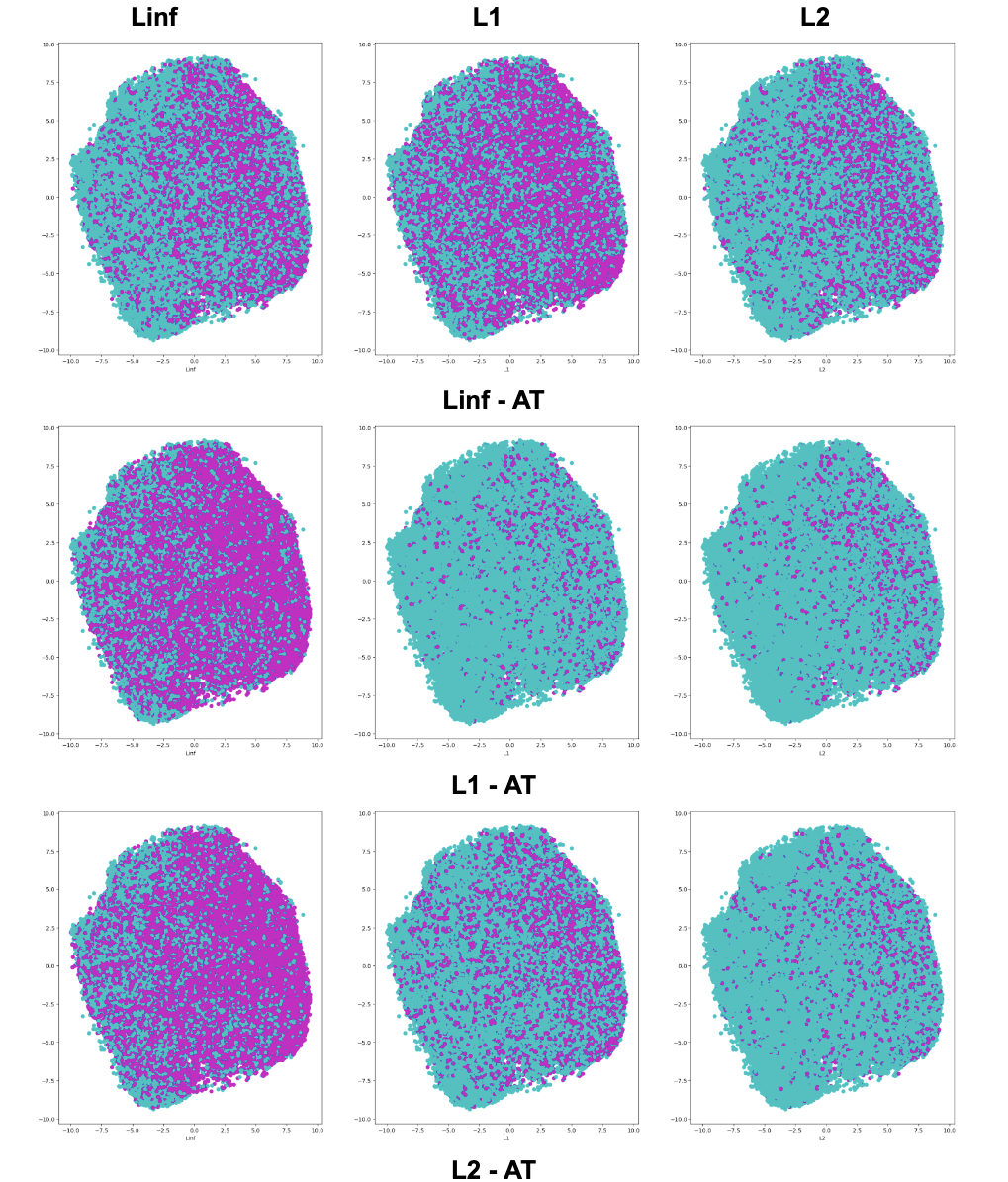}
    \caption{$l_1, l_2, l_\infty$ pre-trained RN18 $l_\infty$-AT models with correct/incorrect predictions against $l_1, l_2, l_\infty$ attacks. Correct predictions are colored with cyan and incorrect with magenta. Each row represents $l_\infty, l_1, l_2$ AT models, respectively. Each column shows the accuracy concerning a certain $l_p$ attack.}
    \label{fig:full-pretrain}
\end{figure}

\begin{figure}[!h]
    \centering
    \includegraphics[width=0.5\columnwidth]{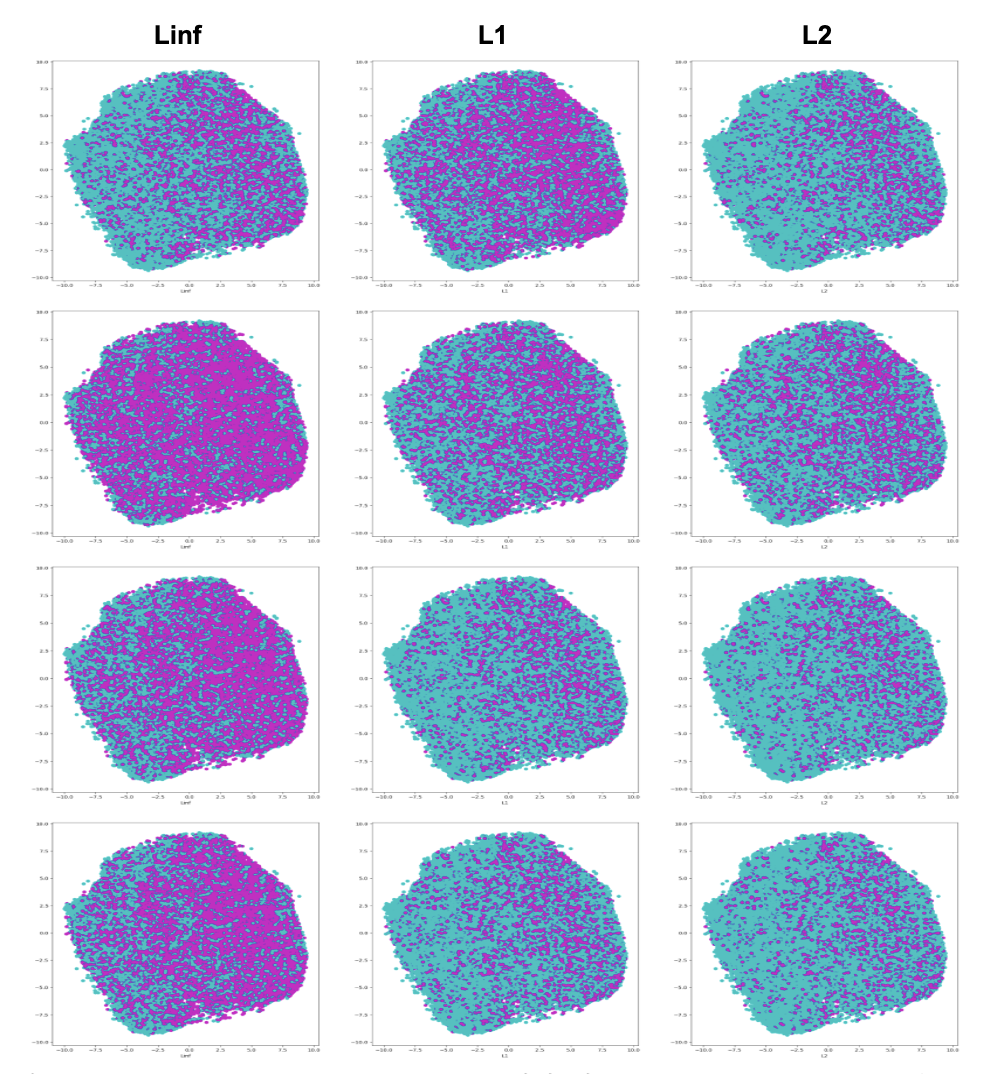}
    \caption{\textbf{Finetune RN18 $l_\infty$-AT model on $l_1$ examples for 3 epochs}. Each row represents the prediction results of epoch $0,1,2,3$ respectively.}
    \label{fig:full-finetune-l1}
\end{figure}

\begin{figure}[!h]
    \centering
    \includegraphics[width=0.5\columnwidth]{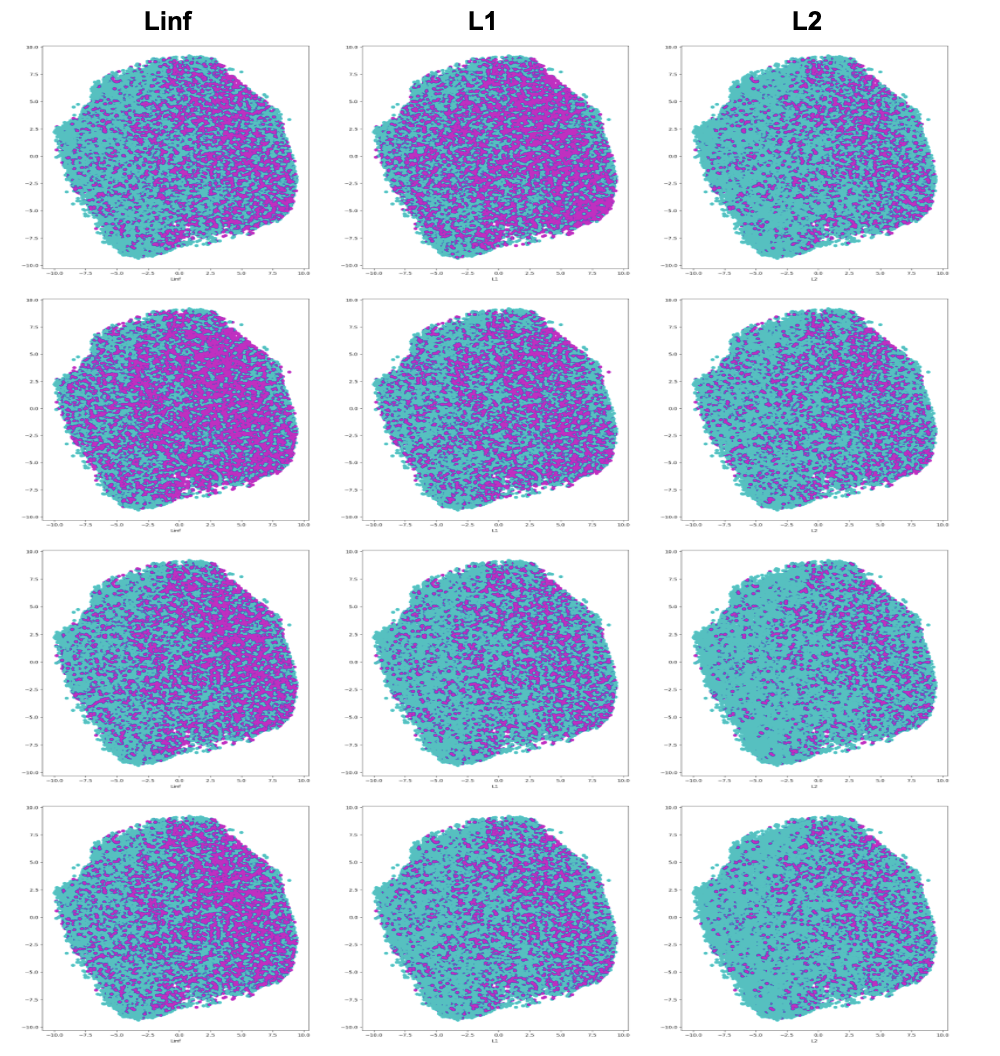}
    \caption{\textbf{Finetune RN18 $l_\infty$-AT model with E-AT for 3 epochs}. Each row represents the prediction results of epoch $0,1,2,3$ respectively.}
    \label{fig:full-finetune-eat}
\end{figure}

\begin{figure}
    \centering
    \includegraphics[width=0.5\columnwidth]{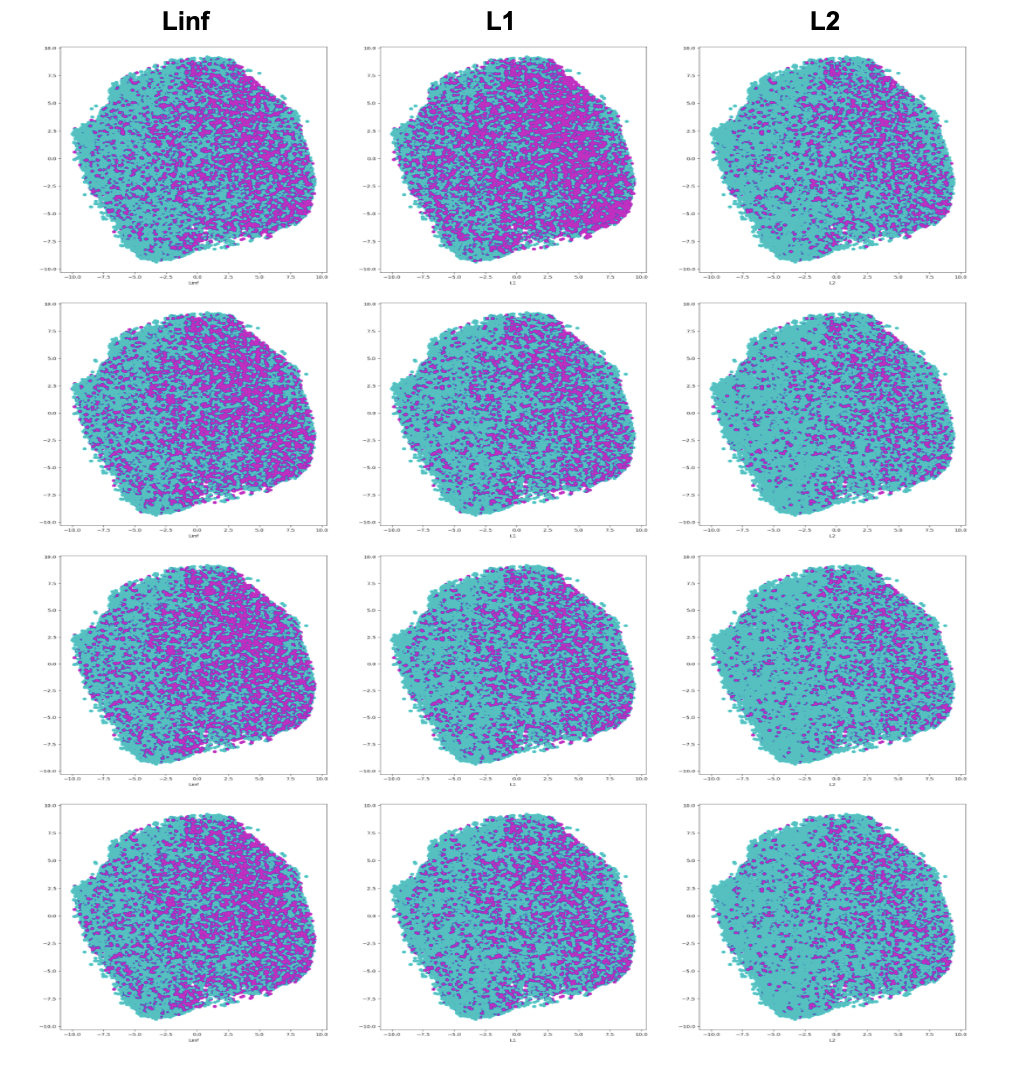}
    \caption{\textbf{Finetune RN18 $l_\infty$-AT model with \textbf{RAMP} for 3 epochs}. Each row represents the prediction results of epoch $0,1,2,3$ respectively.}
    \label{fig:full-finetune-ramp}
\end{figure}

\clearpage
\newpage

\end{document}